%% file: main.tex
\documentclass[letterpaper]{article} 
\PassOptionsToPackage{table}{xcolor}
\usepackage[preprint]{aaai2027}
\usepackage[hyphens]{url}  
\usepackage{graphicx} 
\usepackage{natbib}  
\usepackage{caption} 
\usepackage{algorithm}
\usepackage{algorithmic}
\usepackage{booktabs}
\usepackage{multirow}
\usepackage{adjustbox}
\usepackage{subcaption}
\usepackage{amsmath}
\usepackage{amssymb}
\usepackage{amsthm}

\definecolor{avgshade}{HTML}{72B879}
\definecolor{passshade}{HTML}{6FA8DC}

\newcommand{\Ex}{\mathbb{E}}
\newcommand{\ind}{\mathbb{1}}
\newcommand{\method}{RouteOPD}
\newcommand{\methodfull}{Routed On-Policy Distillation}
\newtheorem{proposition}{Proposition}

\title{Distillation as Probability Transport: Routed On-Policy Distillation}
\author{
Tianle Xia$^{*}$, Lingxiang Hu$^{*}$, Yiding Sun, Linfang Shang, Ming Xu \\
Lan Xu$^{\dagger}$, Ning Zheng, Wei Xu, Jie Jiang
}
\affiliations{
Tencent \\
\texttt{\{tianlexia,lingxianghu,emanuelsun,faelynshang,flemingxu\}@tencent.com} \\
\texttt{\{lanxu,yodazheng,davidxu,zeus\}@tencent.com} \\
{\small $^{*}$First author. \quad $^{\dagger}$Corresponding author.}
}

\begin{document}
\maketitle

\begin{abstract}
On-policy distillation (OPD) transfers teacher knowledge on student-generated trajectories, but
efficient sampled objectives reduce the teacher distribution to scalar credit on individual
tokens. Such credit indicates whether a token should gain or lose probability, yet leaves the
corresponding redistribution unspecified. We recast OPD as teacher-guided probability transport
and propose \method{} (\methodfull{}), which decomposes local teacher--student disagreement into
student-excess sources and teacher-deficit destinations and couples them into explicit transport
pairs. \method{} optimizes pairwise log-odds toward jointly realizable targets obtained from a
bounded teacher potential, while adapting the transport budget to the concentration of teacher
demand. This formulation directs updates toward teacher-preferred destinations and controls their
magnitude within a single transport operator. Experiments across four teacher--student settings
and four mathematical-reasoning benchmarks demonstrate that \method{} consistently outperforms
sampled reverse-KL OPD, with improvements accompanied by higher routing fidelity and lower
background leakage. These results demonstrate the effectiveness of explicitly modeling
probability transport in on-policy distillation.
\end{abstract}


\section{Introduction}

On-policy distillation (OPD) has emerged as an effective post-training paradigm for transferring
reasoning capabilities between language models \citep{agarwal2024opd,qwen3}. Unlike offline
distillation, which supervises a student on fixed teacher or corpus trajectories
\citep{hinton2015distilling,kim2016sequence}, OPD samples from the student and queries the teacher
at the resulting states. It therefore combines on-policy state coverage with
\emph{temporally dense} teacher feedback, reducing the state-distribution mismatch induced by
fixed histories \citep{ross2011reduction,bengio2015scheduled} without relying only on sparse
outcome rewards \citep{xia2026searchp1}. This combination is especially attractive for
long-horizon mathematical reasoning, where an early mistake changes every subsequent state and
final-answer feedback gives little information about which local decisions should be corrected.
Yet this advantage raises a
more basic question: when the teacher prescribes a categorical distributional correction, is
scalar credit on an individual token sufficient to specify how the student policy should change?

\begin{figure}[t]
  \centering
  \includegraphics[width=\columnwidth]{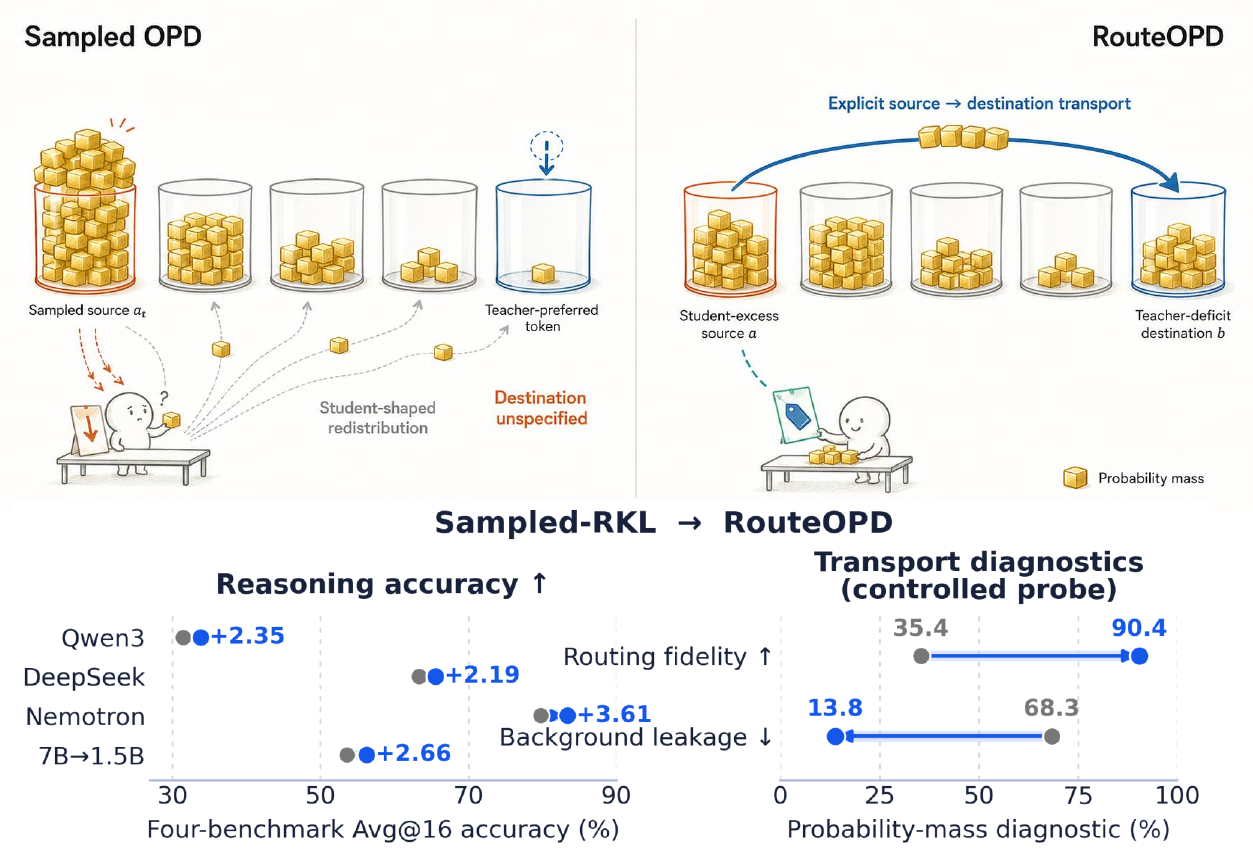}
  \caption{\textbf{\method{} replaces implicit redistribution with teacher-guided transport,}
  improving average reasoning accuracy and routing fidelity while reducing leakage.}
  \label{fig:routeopd-teaser}
\end{figure}

An efficient and common instantiation of OPD, however, leaves each update structurally
under-specified. Although the teacher provides a categorical distribution over the vocabulary, a
sampled objective exposes one student action and one scalar credit at a visited state. This scalar
can indicate whether the sampled token should be encouraged or suppressed, and with what strength.
Yet a normalized categorical policy cannot change one probability in isolation: suppressing one
token releases mass that must be assigned elsewhere, while increasing one token must draw mass
from other alternatives. Thus, \emph{scalar credit is not a transport plan}. It explicitly
identifies at most one side of a redistribution, even though the teacher distribution provides
evidence about both student-excess sources and teacher-deficit destinations.

This distinction matters because the missing route is not neutral: it is filled in by student
softmax geometry. For logits $z$ and a sampled token $a$,
$\nabla_z\log\pi_\theta(a\mid s)=\mathbf e_a-\pi_\theta(\cdot\mid s)$. Consequently, a negative
credit suppresses $a$ but produces a direct background logit update over every non-source token,
weighted by the student's current probabilities. Even when a scalar estimator recovers a dense
objective in expectation, each realized stochastic update still delegates this unspecified
redistribution to the student geometry. It can therefore favor alternatives that are already
common under the student but exhibit no teacher deficit. Sampled OPD may estimate how strongly to
correct a realized action while remaining incomplete about which alternatives should receive that
correction. This per-update under-specification motivates explicit control over both sides of the
redistribution.

We introduce \method{} (\methodfull{}), illustrated in
Figure~\ref{fig:routeopd-overview}, to replace implicit redistribution with explicit,
teacher-guided probability transport. \method{} treats teacher--student disagreement as a local
conservation problem: probability removed from student-excess tokens should be reassigned to
teacher-deficit tokens. At each student-visited state, it constructs a detached union of teacher
and behavior-student top-$k$ tokens, then decomposes the normalized disagreement into excess source
mass and deficit destination mass. Because both distributions are normalized on the common union,
the two masses have equal totals; normalizing them yields the source and destination marginals.
We connect these marginals with their independent product coupling---the maximum-entropy coupling
under fixed marginals. Sampling this coupling produces explicit source--destination routes without
materializing a dense quadratic coupling.

For each sampled route $(a,b)$, \method{} learns the teacher-implied change in pairwise log-odds.
The identity
$\nabla_z[\log\pi_\theta(b\mid s)-\log\pi_\theta(a\mid s)]
=\mathbf e_b-\mathbf e_a$
cancels the shared log-softmax background and gives zero \emph{direct} gradient to unrelated
logits. The realized direct logit update is therefore localized to the specified source and
destination rather than spread across the student's background alternatives. This direct-logit
locality complements the probability-level leakage measurements, which also capture softmax
normalization and shared-parameter effects.

Multiple sampled routes create a separate consistency challenge. Independently clipping each edge
target can violate cycle consistency, so the resulting pairwise demands need not correspond to any
jointly realizable set of token scores. \method{} instead derives every target as the difference
of one bounded teacher potential evaluated at its destination and source. This shared construction
makes all pair targets cycle-consistent and jointly realizable. We adapt the potential range using
the concentration of teacher-deficit demand: concentrated demand identifies clearer destinations
and permits a wider correction budget, whereas diffuse demand yields a more conservative update.
The resulting operator specifies source, destination, and magnitude, reuses the scored teacher
distribution, and adds only $O(k+m)$ routing overhead after top-$k$ extraction.

Experiments across four teacher--student settings and four mathematical-reasoning benchmarks
establish the effectiveness of explicit routing. Relative to sampled reverse-KL OPD, adaptive
\method{} improves the four-benchmark average by $2.19$--$3.61$ points across settings, with a
mean gain of $2.70$ points; paired evaluation intervals over matched problems and samples remain
strictly above zero. It exceeds full-vocabulary reverse KL by $1.24$ points on average, showing
that the gain is not explained merely by access to dense teacher scores, and exceeds fixed-mid
routing by $1.20$ points, isolating the benefit of adaptive budgeting. The improvement is not a
weak-student floor effect: for the strong OpenMath-Nemotron student, the average rises from
$79.73$ under sampled-RKL to $83.34$ under \method{}.

Matched interventions support the proposed transport mechanism rather than an unexamined change
of loss scale. Holding source identity, target magnitude, coupling weight, and nominal budget
fixed, random-background and rank/frequency-matched destinations obtain $62.72$ and $63.72$
Avg@16, whereas teacher-deficit destinations reach $64.43$. On the same frozen diagnostic bank,
\method{} raises routing fidelity from $35.4\%$ for sampled-RKL to $90.4\%$ and reduces
full-vocabulary background leakage from $68.3\%$ to $13.8\%$. Destination identity therefore
changes both the direction of the realized redistribution and downstream reasoning accuracy,
forming the causal link predicted by the transport view.

Further controls validate the remaining design choices and their practical cost. Independent edge
clipping produces $27.6\%$ cycle violations, while the shared potential eliminates these
violations, confirming that its pair targets are jointly realizable. Adaptive budgeting
outperforms fixed-low, fixed-mid, fixed-high, and reversed-concentration controls. Finally, the
default top-$32$ union retains over $97\%$ of both teacher and student mass with $96.38\%$ sign
agreement. It reaches $65.53$ Avg@16 at $2.5\%$ end-to-end overhead, compared with $65.61$ at
$12.5\%$ overhead for full-vocabulary routing. Thus, explicit transport remains both coherent and
sparse rather than trading accuracy for a dense coupling computation.

We make three contributions.
\begin{itemize}
  \item \textbf{A probability-transport view of OPD.} We expose a per-update route
  under-specification in sampled scalar OPD: credit can correct the realized action while leaving
  its redistribution destination to student geometry. We instead formulate categorical
  correction as conserved transport from student-excess sources to teacher-deficit destinations.
  \item \textbf{A sparse, integrable routed objective.} \method{} couples the two marginals,
  realizes each route through pairwise log-odds with zero direct unrelated-logit gradient, and
  derives cycle-consistent targets from a shared bounded potential whose range adapts to teacher
  demand.
  \item \textbf{Effectiveness with causal and practical evidence.} Across four settings,
  \method{} outperforms sampled and full-vocabulary reverse-KL OPD. Matched destination
  interventions and fixed-bank diagnostics link the gains to higher routing fidelity and lower
  leakage, while target-realization and efficiency controls validate the shared potential,
  adaptive budget, and sparse construction.
\end{itemize}

\section{Related Work}

\paragraph{Knowledge distillation and on-policy distillation.}
Knowledge distillation transfers a teacher's predictive distribution to a smaller student
\citep{hinton2015distilling}, with sequence-level and reverse-KL variants extending it to
autoregressive generation \citep{kim2016sequence,gu2024minillm}. Because fixed offline prefixes can
differ from student-generated histories, on-policy distillation (OPD) instead queries the teacher on
states visited by the student \citep{agarwal2024opd}, an idea also adopted in recent open-model
training recipes \citep{qwen3}. Recent theory sharpens when this change of state distribution
matters: online imitation is most useful under student--teacher non-realizability
\citep{zhang2026onlineil}, and noisy expert feedback can create an exponential separation between
offline and online imitation \citep{sriraman2026behavior}. These results explain the value of
student-visited states, but leave open how teacher feedback should produce a categorical update
within each state.

\paragraph{OPD objectives, targets, and stability.}
A central line of work modifies the feedback applied at a visited state. Adaptive target
reformulation constructs an intermediate target to avoid conflicting updates
\citep{jang2026veto}; generalized OPD extrapolates dense teacher feedback and corrects
reference-policy bias \citep{yang2026exopd}; and Uni-OPD combines student-side exploration
balancing with teacher-side reliability calibration \citep{hou2026uniopd}. Complementary diagnoses
identify prefix mismatch and biased top-$k$ reverse-KL estimation and develop corresponding fixes
\citep{zhu2026manyfaces}. More recent methods directly regularize unstable feedback: PowerOPD
replaces the unbounded log-ratio reward with a bounded power transformation
\citep{zhao2026poweropd}, TOP-D dynamically constructs a proximal teacher and reuses data within a
trust region \citep{xie2026topd}, and RG-OPD gates teacher logits using verifier feedback
\citep{akhondzadeh2026rgopd}. These methods determine the target, magnitude, or reliability of
supervision. \method{} is complementary: it determines the vocabulary destination of the
probability removed by a realized correction.

\paragraph{Selecting where and when to distill.}
Another line allocates supervision across the training data or trajectory. PACED weights problems
near the frontier of student competence \citep{xu2026paced}, while PG-OPD uses early
teacher--student overlap to allocate long-rollout budgets to promising trajectories
\citep{zhao2026pgopd}. At finer granularity, TIP selects informative token positions
\citep{xu2026tip}, TRACE routes distillation to critical reasoning spans \citep{wang2026trace}, and
IW-OPD downweights late positions as student prefixes drift from the teacher
\citep{xie2026position}. DEAR further distinguishes uncertain decision tokens from confident but
incorrect evidence tokens \citep{xiao2026dear}. For long-horizon agents, TurnOPD reallocates both
rollout depth and loss budgets across interaction turns \citep{zhou2026turnopd}. All of these methods
answer \emph{which examples, spans, positions, or turns} should receive supervision. At a fixed
selected state, \method{} instead answers \emph{which vocabulary alternative} should receive mass
from a student-excess source. Thus its action-space routing is orthogonal to temporal or data-level
selection.

\paragraph{Understanding OPD's learning dynamics.}
Empirical analyses connect OPD success to teacher--student thinking-pattern compatibility and
progressive alignment on overlapping tokens \citep{li2026rethinkingopd}. Parameter-space studies
provide a complementary view: early OPD updates rapidly stabilize in direction and can be used to
accelerate later training \citep{cai2026foresee}, while broader checkpoint analyses find that dense
teacher supervision nevertheless produces small, coordinate-sparse, and spectrally concentrated
updates \citep{yu2026geometry}. These findings characterize \emph{when} OPD succeeds and
\emph{how} its changes accumulate. Our analysis operates one level earlier, exposing the
per-update softmax redistribution that generates those dynamics and replacing its implicit
destination rule with explicit excess-to-deficit routes.

\begin{figure*}[t]
  \centering
  \includegraphics[width=\textwidth]{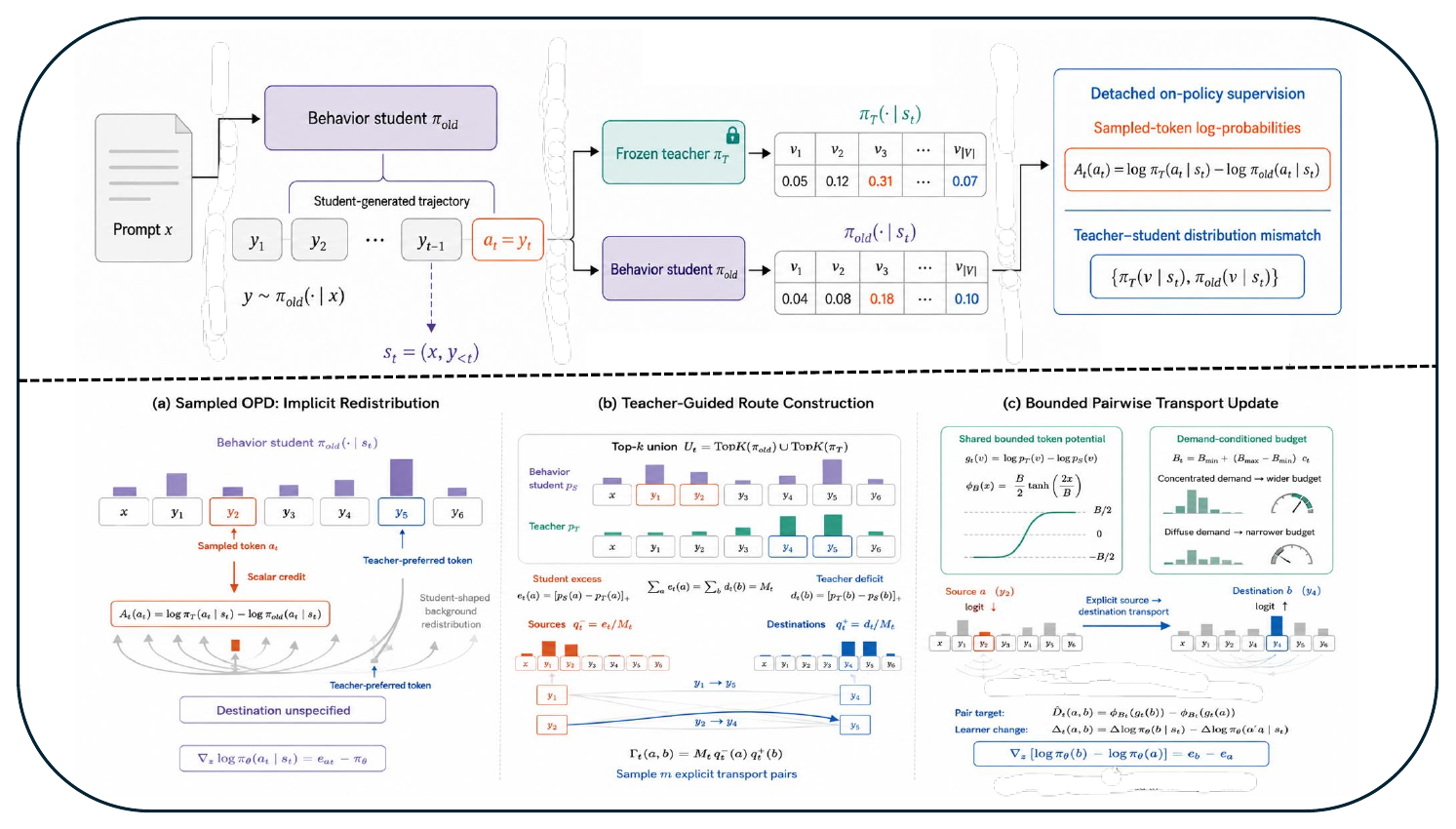}
  \caption{\textbf{Overview of \method{}.} Sampled OPD (a) assigns scalar credit to one token and
  leaves redistribution to student softmax geometry. \method{} (b) decomposes teacher--student
  disagreement into excess sources and deficit destinations and samples explicit routes.
  A bounded, demand-conditioned token potential (c) supplies integrable pairwise log-odds targets,
  producing zero direct gradient on unrelated logits.}
  \label{fig:routeopd-overview}
\end{figure*}

\input{sections/main_results_table}

\section{Method: \method}

\method{} converts teacher--student disagreement at a student-visited state into explicit
source--destination updates with teacher-directed destinations, direct logit locality, and jointly
realizable targets.

\subsection{From scalar feedback to explicit routes}
\paragraph{Implicit redistribution in sampled OPD.}
For a prompt $x$, the behavior student samples $y\sim\pi_{\mathrm{old}}(\cdot\mid x)$ and visits
$s_t=(x,y_{<t})$, with $a_t=y_t$. The sampled reverse-KL baseline uses the detached advantage
\begin{equation}
  A_t(a_t)=\operatorname{sg}\!\left[
  \log\pi_T(a_t\mid s_t)
  -\log\pi_{\mathrm{old}}(a_t\mid s_t)\right],
  \label{eq:sampled-opd-advantage}
\end{equation}
where $\operatorname{sg}$ is stop-gradient. With importance ratio
$\rho_t=\pi_\theta(a_t\mid s_t)/\pi_{\mathrm{old}}(a_t\mid s_t)$, the local loss is
$-\rho_tA_t$. At the start of an update, a negative advantage produces
\begin{equation}
  -\nabla_z\ell_{\mathrm{sOPD}}^{(t)}
  =|A_t(a_t)|
  \left[\pi_{\mathrm{old}}(\cdot\mid s_t)-\mathbf e_{a_t}\right].
  \label{eq:implicit-redistribution}
\end{equation}
Thus the sampled token is suppressed while student geometry distributes the released mass over
non-source tokens: the teacher identifies the correction sign, not its recipient
(Appendix~\ref{app:scalar-redistribution}).

\paragraph{Excess and deficit.}
At rollout/scoring time, form a sparse shared support and separately renormalize each model on it:
\begin{equation}
  \mathcal U_t
  =\operatorname{TopK}(\pi_{\mathrm{old}}(\cdot\mid s_t))
  \cup\operatorname{TopK}(\pi_T(\cdot\mid s_t)).
  \label{eq:topk-union}
\end{equation}
\begin{equation}
  p_X(a)=
  \frac{\pi_X(a\mid s_t)}
  {\sum_{v\in\mathcal U_t}\pi_X(v\mid s_t)},\quad
  X\in\{S,T\},\ a\in\mathcal U_t,
  \label{eq:union-distributions}
\end{equation}
where $\pi_S=\pi_{\mathrm{old}}$. All routing statistics are detached within an update. Define
\begin{equation}
  e_t(a)=[p_S(a)-p_T(a)]_+,\qquad
  d_t(a)=[p_T(a)-p_S(a)]_+.
  \label{eq:excess-deficit}
\end{equation}
Their common mass is
\begin{equation}
  M_t
  =\sum_{a\in\mathcal U_t}e_t(a)
  =\sum_{a\in\mathcal U_t}d_t(a)
  =\operatorname{TV}(p_S,p_T),
  \label{eq:mismatch-mass}
\end{equation}
so $e_t$ identifies sources, $d_t$ identifies destinations, and $M_t$ is the available transport
mass. States with $M_t\leq\delta_M$ receive zero route loss.

\subsection{Sparse excess-to-deficit routing}
For $M_t>\delta_M$, normalize both marginals and use their independent product coupling:
\begin{equation}
  \begin{aligned}
  q_t^-(a)&=\frac{e_t(a)}{M_t},\qquad
  q_t^+(b)=\frac{d_t(b)}{M_t},\\
  \Gamma_t(a,b)&=M_tq_t^-(a)q_t^+(b).
  \end{aligned}
  \label{eq:transport-marginals}
\end{equation}
This coupling preserves both marginals; sampling $m$ independent routes
$a_i\sim q_t^-,b_i\sim q_t^+$ avoids the $k^2$ table. Appendix~\ref{app:coupling-details} derives
the marginal identities and unbiased estimator.

\subsection{Directed and integrable pairwise targets}
\paragraph{Direct locality.}
For route $(a,b)$, define the pairwise log-odds change
\begin{equation}
  \begin{aligned}
  \Delta_t(a,b)
  ={}&\bigl[\log\pi_\theta(b\mid s_t)
       -\log\pi_\theta(a\mid s_t)\bigr]\\
     &-\bigl[\log\pi_{\mathrm{old}}(b\mid s_t)
       -\log\pi_{\mathrm{old}}(a\mid s_t)\bigr].
  \end{aligned}
  \label{eq:pairwise-logodds-change}
\end{equation}
Its logit gradient is
\begin{equation}
  \nabla_z\!\left(\log\pi_\theta(b)-\log\pi_\theta(a)\right)
  =\mathbf e_b-\mathbf e_a.
  \label{eq:pairwise-routing-gradient}
\end{equation}
Increasing $\Delta_t$ directly raises only the destination logit and lowers only the source logit;
this establishes direct-logit locality, while Appendix~\ref{app:probability-locality}
characterizes the induced probability changes under softmax normalization.

\paragraph{Teacher target and adaptive budget.}
Let $\ell_{\delta_p}(x)=\log(\max\{x,\delta_p\})$ and define the teacher token potential
\begin{equation}
  g_t(v)=\ell_{\delta_p}(p_T(v))-\ell_{\delta_p}(p_S(v)).
  \label{eq:teacher-token-potential}
\end{equation}
For the nonempty deficit set $\mathcal D_t=\{b\in\mathcal U_t:d_t(b)>0\}$ with
$n_t=|\mathcal D_t|$, measure teacher-demand concentration by the normalized Herfindahl index
\begin{equation}
  c_t=
  \begin{cases}
    1, & n_t=1,\\[2pt]
    \displaystyle
    \frac{n_t\sum_{b\in\mathcal D_t}q_t^+(b)^2-1}
         {n_t-1}, & n_t>1,
  \end{cases}
  \label{eq:demand-concentration}
\end{equation}
where $c_t\in[0,1]$. For $0<B_{\min}\leq B_{\max}$, set the target budget to
\begin{equation}
  B_t=B_{\min}+(B_{\max}-B_{\min})c_t.
  \label{eq:transport-budget}
\end{equation}
Rather than independently clipping pair targets, bound one shared token potential:
\begin{equation}
  \begin{aligned}
  \phi_B(x)&=\frac{B}{2}\tanh\!\left(\frac{2x}{B}\right),\\
  \widehat D_t(a,b)
  &=\phi_{B_t}(g_t(b))-\phi_{B_t}(g_t(a)).
  \end{aligned}
  \label{eq:bounded-potential-target}
\end{equation}
Monotonicity gives $\widehat D_t(a,b)\in[0,B_t)$, and the shared-potential shift
$z_v\leftarrow z_v+\phi_{B_t}(g_t(v))$ jointly realizes every pair target.
Appendix~\ref{app:target-properties} proves positivity, boundedness, and integrability and contrasts
independent edge clipping.

\subsection{Objective and implementation}
With Huber transition $\kappa>0$,
\begin{equation}
  h_\kappa(r)=
  \begin{cases}
    \frac{1}{2}r^2, & |r|\leq\kappa,\\[2pt]
    \kappa\left(|r|-\frac{1}{2}\kappa\right), & |r|>\kappa.
  \end{cases}
  \label{eq:huber-loss}
\end{equation}
the \method{} token loss is
\begin{equation}
  \mathcal L_{\mathrm{RouteOPD}}^{(t)}
  =M_t\Ex_{(a,b)\sim q_t^-\times q_t^+}
  \!\left[h_\kappa\!\left(r_t(a,b)\right)\right].
  \label{eq:routeopd-objective}
\end{equation}
\method{} replaces sampled reverse-KL; pairs and routing statistics are frozen within each update,
so only current-student terms in $\Delta_t$ are differentiable. Let $\mu_t$ mark valid generated
response positions.

\begin{algorithm}[t]
\caption{\method{} at one student-visited state}
\label{alg:routeopd}
\textbf{Input}: response mask $\mu_t$; teacher/old-student top-$k$ IDs;\\
detached cross-gathered $\log\pi_T,\log\pi_{\mathrm{old}}$ on their union;\\
differentiable current $\log\pi_\theta$ for sampled-ID gathers\\
\textbf{Output}: one \method{} token loss
\begin{algorithmic}[1]
\STATE form $\mathcal U_t$ from IDs; build detached $p_S,p_T$
\STATE compute $e_t,d_t,M_t$ by Eqs.~\eqref{eq:excess-deficit}--\eqref{eq:mismatch-mass}
\STATE \textbf{if} $\mu_t=0$ or $M_t\leq\delta_M$ \textbf{return} $0$
\STATE form $q_t^-,q_t^+$ and sample $m$ independent pairs
\STATE compute $g_t,c_t,B_t$ by Eqs.~\eqref{eq:teacher-token-potential}, \eqref{eq:demand-concentration}, and \eqref{eq:transport-budget}
\STATE bound token potentials and form $\widehat D_t$ by Eq.~\eqref{eq:bounded-potential-target}
\STATE compute $\Delta_t(a_i,b_i)$ from current/old log-probabilities
\STATE \textbf{return} $\frac{M_t}{m}\sum_{i=1}^m h_\kappa(\Delta_t(a_i,b_i)-\widehat D_t(a_i,b_i))$
\end{algorithmic}
\end{algorithm}

After top-$k$ scoring, routing costs $O(k+m)$ per valid token and reuses teacher logits without an
extra forward pass. Appendix~\ref{app:implementation-details} specifies cross-gathering, validity,
numerical conventions, defaults, and independent-edge clipping.

\section{Experiments}
\subsection{Experimental Setup}
\paragraph{Setup.}
We evaluate four teacher--student pairs spanning Qwen3, DeepSeek, and Nemotron lineages on
MATH500 \citep{hendrycks2021math,lightman2024verify,math500dataset}, AMC23 \citep{amc2023},
AIME24 \citep{aime2024}, and AIME25 \citep{aime2025} using Avg@16 exact-answer accuracy.
All trainable arms share DAPO-Math-17K \citep{yu2025dapo,dapoMath17k}, rollout seeds, teacher calls, and update
budgets; token-level comparisons require exact tokenizer compatibility. We compare sampled-RKL,
full-vocabulary KL, fixed-mid routing, and adaptive \method{}. Unless varied, $k=32$, $m=2$,
$B_{\min}=\log1.2$, and $B_{\max}=\log1.5$. Full training, decoding, statistical, and
decontamination details are deferred to the supplementary material.

\subsection{How Well Does \method{} Perform?}

\noindent\textbf{Finding 1: \method{} improves every teacher--student setting.}
Relative to sampled-RKL, \method{} improves the four-benchmark average by $2.19$--$3.61$ points
($2.70$ on average). It also exceeds full-vocabulary RKL by $1.13$--$1.44$ points and fixed-mid routing
by $1.10$--$1.43$ points. The gains persist from a weak Qwen3 base student to the strong
OpenMath-Nemotron student, indicating that explicit routing is not tied to one initialization or
teacher scale.
Matched problem-and-sample bootstrap intervals are also uniformly positive: the macro improvements
are $+2.70\,[2.28,3.13]$ over sampled-RKL, $+1.24\,[0.90,1.58]$ over full-vocabulary RKL, and
$+1.20\,[0.89,1.51]$ over fixed-mid routing, confirming the gains across matched evaluation
resamples.

\subsection{Why Does Explicit Routing Help?}
We now test the three decisions that define explicit routing: \emph{where} probability should go,
\emph{how} the requested changes should be realized jointly, and \emph{how much} transport each
state should receive.

\begin{figure}[t]
  \centering
  \includegraphics[width=\columnwidth]{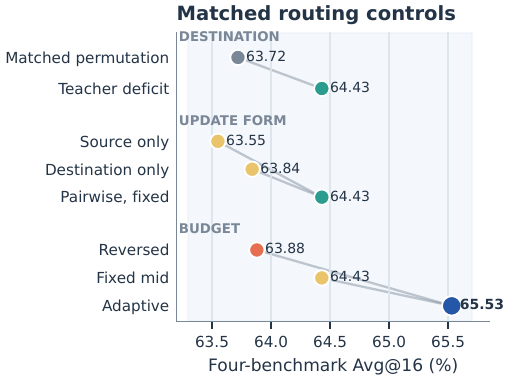}
  \caption{\textbf{Matched routing controls.}
  Each block varies one design family while retaining the remaining fixed-route construction;
  the points are separate contrasts rather than a sequential path from sampled-RKL.}
  \label{fig:component-path}
\end{figure}

\noindent\textbf{Finding 2: Destination identity and a two-sided update both matter.}
Figure~\ref{fig:component-path} first isolates where removed probability is assigned. A
rank/frequency-matched destination permutation reaches $63.72$, whereas teacher-deficit
destinations reach $64.43$ under the same fixed-mid routed objective; random-background
destinations fall further to $62.72$. The update-form block then compares the same routed
construction with one side disabled: source-only and destination-only updates reach $63.55$ and
$63.84$, respectively, compared with $64.43$ for the pairwise update. Thus the gain is not
explained by selecting unusual tokens or updating only one endpoint: the destination must follow
teacher deficit, and the source and destination must be corrected together.

\noindent\textbf{Finding 3: Shared potentials make route targets jointly realizable.}
Independent clipping treats each sampled edge as an unrelated target and produces $27.6\%$
four-cycle violations. Deriving all pair targets from one bounded token potential eliminates these
violations, reduces target MAE from $0.091$ to $0.067$, and raises bounded target completion from
$76.4\%$ to $83.2\%$ under the fixed-mid budget. With adaptive budgeting, MAE further falls to
$0.052$ and completion reaches $88.1\%$. These diagnostics isolate the consistency benefit of the
shared potential, while the following budget comparison isolates magnitude adaptation.

\noindent\textbf{Finding 4: Adaptive budgeting helps when teacher demand is concentrated.}
The aggregate comparison in Figure~\ref{fig:component-path} favors adaptive over fixed-mid
budgeting, $65.53$ versus $64.43$, while reversing the concentration rule reaches only $63.88$.
More importantly, the adaptive-minus-fixed difference grows with teacher-deficit concentration:
$-0.10$, $+0.40$, $+1.30$, and $+2.70$ points across the four concentration bins. Adaptive
budgeting therefore does not merely enlarge every update; it allocates additional transport in
the states where teacher demand is most concentrated.

\begin{figure}[!ht]
  \centering
  \includegraphics[width=\columnwidth]{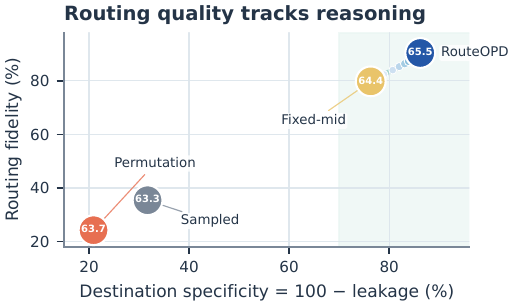}
  \caption{\textbf{Mechanism phase portrait.}
  Large markers are matched method variants with Avg@16 printed inside; small blue markers trace
  adaptive \method{} checkpoints. Desired transport lies toward the upper right.}
  \label{fig:transport-path}
\end{figure}

\noindent\textbf{Finding 5: Better task performance accompanies a stronger routing signature.}
Figure~\ref{fig:transport-path} places the same interventions in fidelity--specificity space.
Sampled-RKL has $35.4\%$ routing fidelity and $68.3\%$ full-vocabulary background leakage; the
matched destination permutation degrades these to $24.2\%$ and $79.1\%$. Teacher-deficit
destinations with a fixed-mid budget instead reach $79.8\%$ fidelity and $23.7\%$ leakage, and
adaptive \method{} reaches $90.4\%$ and $13.8\%$. Along adaptive \method{} checkpoints, this
routing signature strengthens while AIME24/25 Avg@16 rises from $27.19$ to $43.65$. Together with
the matched destination intervention, the trajectory is consistent with the proposed mechanism;
the checkpoint correlation alone is not treated as causal identification.

\paragraph{The effect is broad across states.}
The fixed 1,024-state bank confirms that the mean effect is broad: adaptive \method{} reaches
$92.7\%$ median fidelity and $76.5\%$ at the tenth percentile, versus $33.1\%$ median fidelity for
sampled-RKL, while median full-vocabulary leakage falls from $70.1\%$ to $11.2\%$.

\subsection{Training Dynamics and Efficiency}

\begin{figure}[t]
  \centering
  \includegraphics[width=\columnwidth]{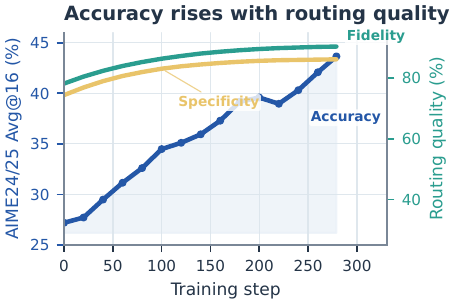}
  \caption{\textbf{Training evidence on JustRL--DeepSeek.}
  Accuracy, routing fidelity, and destination specificity ($100-$background leakage) improve
  together.}
  \label{fig:training-evidence}
\end{figure}

\noindent\textbf{Finding 6: The routing signature emerges throughout training.}
Figure~\ref{fig:training-evidence} shows a largely monotone accuracy rise as routing fidelity and
destination specificity improve and then saturate, rather than an isolated favorable checkpoint.
The appendix compares mismatch, fidelity, and leakage trajectories with sampled-RKL and fixed-mid
routing.

\paragraph{Sparse support and pair sampling.}
The sparse construction retains $97.79\%$ of student mass and $97.13\%$ of teacher mass at $k=32$.
Its $65.53$ Avg@16 requires only $2.5\%$ end-to-end overhead, whereas full-vocabulary routing
reaches $65.61$ at $12.5\%$ overhead. For route sampling, $m=2$ is best in this setting:
$m=1$ lowers target coverage from $76.8\%$ to $58.4\%$, whereas $m=4$ and $m=8$ reduce route-loss
variance but also reduce Avg@16 from $65.53$ to $62.48$ and $61.12$. The resulting optimum at
$m=2$ balances estimator variance with selective target coverage.

\begin{figure}[t]
  \centering
  \includegraphics[width=\columnwidth]{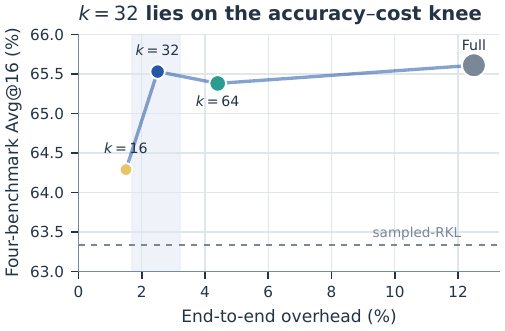}
  \caption{\textbf{Accuracy--cost trade-off.}
  Marker area encodes peak memory. The sparse $k=32$ default lies at the Pareto knee and recovers
  nearly all full-vocabulary accuracy at one fifth of its end-to-end overhead.}
  \label{fig:efficiency-pareto-main}
\end{figure}

\noindent\textbf{Finding 7: Sparse routing captures the practical Pareto knee.}
Figure~\ref{fig:efficiency-pareto-main} shows the largest accuracy gain from $k=16$ to $32$;
$k=64$ and full-vocabulary routing add substantial cost for little further accuracy on this system.

\subsection{Behavioral Outcome Analysis}

\begin{figure}[t]
  \centering
  \includegraphics[width=\columnwidth]{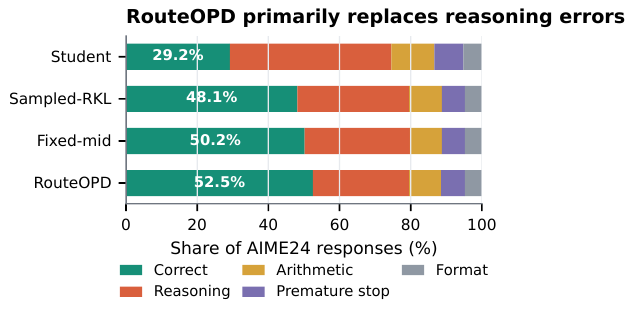}
  \caption{\textbf{AIME24 response composition.}
  \method{} gains primarily replace reasoning errors, not formatting or truncation failures.}
  \label{fig:error-composition-main}
\end{figure}

\noindent\textbf{Finding 8: On AIME24, explicit routing primarily repairs reasoning errors.}
Figure~\ref{fig:error-composition-main} shows that the correct-response share rises from $48.1\%$
under sampled-RKL and $50.2\%$ under fixed-mid routing to $52.5\%$ under \method{}. Among 480
matched responses, \method{} converts 42 sampled-RKL failures to correct solutions while losing
21 previously correct solutions; against fixed-mid routing, the corresponding counts are 27 and
16.
Of the 42 sampled-RKL failures repaired by \method{}, 34 are reasoning errors, compared with five
arithmetic errors, two premature stops, and one formatting error. Format validity remains
$95.21\%$, while the clipping rate decreases from $4.47\%$ to $2.34\%$. Two method-blinded passes
of a fixed LLM judge reach $92.4\%$ raw agreement and Cohen's $\kappa=0.864$. These results
show that recovered reasoning is the dominant behavioral change, accompanied by stable format
validity and a lower clipping rate.

\section{Conclusion}
Sampled OPD leaves redistributed probability mass without an explicit destination. \method{}
resolves this ambiguity through teacher-guided transport from student-excess sources to
teacher-deficit destinations, using cycle-consistent targets and demand-adaptive magnitudes.
Across four teacher--student settings and four reasoning benchmarks, \method{} consistently
outperforms sampled reverse-KL OPD, full-vocabulary reverse KL, and fixed-mid routing. Controlled
diagnostics attribute these gains to more faithful routing with less leakage, while sparse routing
retains nearly full-vocabulary accuracy at substantially lower cost.

\bibliography{opd}

\clearpage
\appendix
\input{sections/experiment_appendix}

\end{document}

%% file: sections/main_results_table.tex
\begin{table*}[t]
\centering
\footnotesize
\setlength{\tabcolsep}{4.0pt}
\renewcommand{\arraystretch}{1.02}
\caption{\textbf{Mathematical-reasoning accuracy across four teacher--student settings.}
Entries are Avg@16 accuracy (\%); Avg. is the arithmetic mean of the four benchmarks. Fixed-mid
routing uses $B=\log1.35$. \textbf{Bold} marks the best student method in each block.}
\label{tab:main-wide}
\begin{adjustbox}{max width=\textwidth}
\begin{tabular*}{\textwidth}{@{\extracolsep{\fill}}lrrrrr@{}}
\toprule
Method & MATH500 & AMC23 & AIME24 & AIME25 & Avg. \\
\midrule
\multicolumn{6}{l}{\textit{Qwen3-4B-Base-GRPO $\to$ Qwen3-1.7B-Base}} \\
Student (untrained; no OPD) & 45.31 & 11.67 & 1.46 & 1.04 & 14.87 \\
Sampled-RKL OPD & 70.18 & 37.65 & 11.25 & 6.67 & 31.44 \\
Full-vocabulary RKL & 70.86 & 39.16 & 12.08 & 7.29 & 32.35 \\
Routed OPD, fixed-mid $B=\log1.35$ & 70.63 & 39.46 & 11.88 & \textbf{7.50} & 32.36 \\
\textbf{\method{} (ours)} & \textbf{71.48} & \textbf{42.85} & \textbf{13.33} &
  \textbf{7.50} & \cellcolor{avgshade!55}\textbf{33.79} \\
\midrule
\multicolumn{6}{l}{\textit{JustRL-DeepSeek-1.5B $\to$ DeepSeek-R1-Distill-Qwen-1.5B}} \\
Student (untrained; no OPD) & 84.90 & 69.05 & 29.17 & 25.21 & 52.08 \\
Sampled-RKL OPD & 86.63 & 85.47 & 48.13 & 33.13 & 63.34 \\
Full-vocabulary RKL & 87.51 & 86.14 & 49.38 & 34.58 & 64.40 \\
Routed OPD, fixed-mid $B=\log1.35$ & 87.31 & 86.45 & 50.21 & 33.75 & 64.43 \\
\textbf{\method{} (ours)} & \textbf{88.08} & \textbf{86.75} & \textbf{52.50} &
  \textbf{34.79} & \cellcolor{avgshade!55}\textbf{65.53} \\
\midrule
\multicolumn{6}{l}{\textit{JustRL-Nemotron-1.5B $\to$ OpenMath-Nemotron-1.5B}} \\
Student (untrained; no OPD) & 92.40 & 88.10 & 53.13 & 48.13 & 70.44 \\
Sampled-RKL OPD & 93.21 & \textbf{96.54} & 68.96 & 60.21 & 79.73 \\
Full-vocabulary RKL & 93.13 & 96.23 & 73.54 & 65.63 & 82.13 \\
Routed OPD, fixed-mid $B=\log1.35$ & 93.28 & 96.08 & 74.38 & 65.21 & 82.24 \\
\textbf{\method{} (ours)} & \textbf{93.53} & 96.08 & \textbf{75.83} &
  \textbf{67.92} & \cellcolor{avgshade!55}\textbf{83.34} \\
\midrule
\multicolumn{6}{l}{\textit{DeepSeek-R1-Distill-Qwen-7B $\to$
DeepSeek-R1-Distill-Qwen-1.5B}} \\
Student (untrained; no OPD) & 84.90 & 69.05 & 29.17 & 25.21 & 52.08 \\
Sampled-RKL OPD & 88.24 & 70.93 & 30.63 & 24.58 & 53.59 \\
Full-vocabulary RKL & 88.63 & 72.89 & 32.71 & 26.04 & 55.07 \\
Routed OPD, fixed-mid $B=\log1.35$ & 88.81 & 72.67 & 32.50 & 26.25 & 55.06 \\
\textbf{\method{} (ours)} & \textbf{88.93} & \textbf{74.62} & \textbf{33.96} &
  \textbf{27.50} & \cellcolor{avgshade!55}\textbf{56.25} \\
\bottomrule
\end{tabular*}
\end{adjustbox}
\end{table*}

%% file: sections/experiment_appendix.tex
\section{Proofs and Additional Method Details}
\label{app:method-details}

\subsection{Scalar OPD Redistribution}
\label{app:scalar-redistribution}

\begin{proposition}[Implicit redistribution under sampled OPD]
\label{prop:implicit-redistribution}
Fix a visited state $s_t$ and suppose the learner is evaluated at the behavior policy,
$\pi_\theta=\pi_{\mathrm{old}}$. If the sampled action $a_t$ has detached advantage
$A_t(a_t)<0$, then the negative gradient of the sampled-RKL loss is
\[
  -\nabla_z\ell_{\mathrm{sOPD}}^{(t)}
  =|A_t(a_t)|\bigl[\pi_{\mathrm{old}}-\mathbf e_{a_t}\bigr].
\]
Consequently, its source-logit component is negative and every non-source component is positive.
Their components sum to zero.
\end{proposition}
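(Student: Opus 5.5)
The computation is a direct chain-rule argument on the softmax parameterization. Write the student's logits at $s_t$ as $z\in\mathbb{R}^{|\mathcal V|}$, with $\pi_\theta(\cdot\mid s_t)=\operatorname{softmax}(z)$. The local sampled-RKL loss is $\ell_{\mathrm{sOPD}}^{(t)}=-\rho_tA_t(a_t)$, where $\rho_t=\pi_\theta(a_t\mid s_t)/\pi_{\mathrm{old}}(a_t\mid s_t)$. Because $A_t$ is wrapped in $\operatorname{sg}$ and $\pi_{\mathrm{old}}$ is detached, only the numerator of $\rho_t$ depends on $z$. The plan is to differentiate $\rho_t$, evaluate at $\pi_\theta=\pi_{\mathrm{old}}$, and then read off the signs and the sum of the components.

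\textbf{Step 1 (gradient of the ratio).} I will use $\nabla_z\pi_\theta(a_t)=\pi_\theta(a_t)\nabla_z\log\pi_\theta(a_t)$ and the standard log-softmax identity $\nabla_z\log\pi_\theta(a_t)=\mathbf e_{a_t}-\pi_\theta(\cdot\mid s_t)$, which the introduction already records. Together these give
\[
\nabla_z\rho_t=\frac{\pi_\theta(a_t\mid s_t)}{\pi_{\mathrm{old}}(a_t\mid s_t)}\bigl[\mathbf e_{a_t}-\pi_\theta(\cdot\mid s_t)\bigr].
\]
At $\pi_\theta=\pi_{\mathrm{old}}$ the prefactor equals $1$. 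Hence $-\nabla_z\ell_{\mathrm{sOPD}}^{(t)}=A_t(a_t)\bigl[\mathbf e_{a_t}-\pi_{\mathrm{old}}\bigr]$. Substituting $A_t=-|A_t|$ for a negative advantage yields the displayed formula.

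\textbf{Step 2 (signs).} The source component is $|A_t|\bigl(\pi_{\mathrm{old}}(a_t)-1\bigr)$, and each non-source component is $|A_t|\,\pi_{\mathrm{old}}(v)$. Strict negativity of the first needs $\pi_{\mathrm{old}}(a_t)<1$. Strict positivity of the others needs $\pi_{\mathrm{old}}(v)>0$. Both follow because softmax with finite logits assigns strictly positive mass to every token, so that $0<\pi_{\mathrm{old}}(v)$ and $\pi_{\mathrm{old}}(a_t)<1$ whenever $|\mathcal V|\ge 2$. I will state this vocabulary-size assumption explicitly.

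\textbf{Step 3 (zero sum).} The components sum to zero because $\mathbf 1^\top(\pi_{\mathrm{old}}-\mathbf e_{a_t})=1-1=0$.

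\textbf{Main obstacle.} There is no real difficulty here. The only point needing care is Step 1: I must make the stop-gradient convention precise, so that $\pi_{\mathrm{old}}$ inside both $A_t$ and the denominator of $\rho_t$ is treated as constant. I must also note that this formula describes the gradient only at the start of the update, where $\rho_t=1$. Away from that point the gradient is rescaled by $\rho_t$ and uses $\pi_\theta$ rather than $\pi_{\mathrm{old}}$, though it keeps the same sign structure.
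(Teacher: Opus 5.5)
Your proposal is correct and follows essentially the same route as the paper: apply the log-softmax identity $\nabla_z\log\pi(a)=\mathbf e_a-\pi$ to the ratio surrogate $-\rho_tA_t$, set $\rho_t=1$ and $\pi_\theta=\pi_{\mathrm{old}}$ at the start of the update, and read off the sign of each component. Your explicit treatment of strict positivity (finite logits and $|\mathcal V|\ge 2$) and your zero-sum check fill in details that the paper leaves implicit.
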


\begin{proof}
Let $z$ be the learner logits at one visited state and abbreviate
$\pi=\pi_\theta(\cdot\mid s_t)$. The log-softmax identity gives
\begin{equation}
  \begin{aligned}
  \log\pi(a)&=z_a-\log\sum_v\exp(z_v),\\
  \nabla_z\log\pi(a)&=\mathbf e_a-\pi,
  \end{aligned}
  \label{eq:single-token-logit-gradient}
\end{equation}
where $\mathbf e_a$ is the one-hot vector for token $a$. At the start of an update,
$\pi_\theta=\pi_{\mathrm{old}}$ and $\rho_t=1$. Substituting
Eq.~\eqref{eq:single-token-logit-gradient} into the sampled-RKL surrogate yields
Eq.~\eqref{eq:implicit-redistribution}. A direct descent step for $A_t(a_t)<0$ lowers the sampled
source component in proportion to $1-\pi_{\mathrm{old}}(a_t\mid s_t)$ and raises each non-source
component $v\neq a_t$ in proportion to $\pi_{\mathrm{old}}(v\mid s_t)$. Ratio clipping can limit
the step magnitude, but it does not change this single-action score direction.
\end{proof}

\subsection{Coupling Identities and Monte Carlo Estimator}
\label{app:coupling-details}

\begin{proposition}[Exact marginals, maximum entropy, and unbiased estimation]
\label{prop:coupling-estimator}
Let $p_S$ and $p_T$ be probability distributions on the same finite support $\mathcal U_t$, and
let $e_t,d_t,M_t,q_t^-,q_t^+$ be defined as in
Eqs.~\eqref{eq:mismatch-mass}--\eqref{eq:transport-marginals}. Then
$M_t=\frac12\lVert p_S-p_T\rVert_1$. If $M_t>0$, the product coupling
$\Gamma_t(a,b)=M_tq_t^-(a)q_t^+(b)$ has marginals $e_t$ and $d_t$ and, after division by $M_t$,
is the maximum-entropy joint distribution with marginals $q_t^-$ and $q_t^+$. For any finite
pair cost $C_t$, the $m$-pair estimator in Eq.~\eqref{eq:transport-monte-carlo} is unbiased.
\end{proposition}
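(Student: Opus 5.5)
The argument has four parts, taken in the order of the statement: the total-variation identity, the marginals, maximum entropy, and unbiasedness.

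\textbf{Total variation.} I start from the pointwise identity
\[
p_S(a)-p_T(a)=e_t(a)-d_t(a), \qquad |p_S(a)-p_T(a)|=e_t(a)+d_t(a).
\]
It holds because at most one of the two positive parts is nonzero at each $a$. Summing the first identity over $\mathcal U_t$ and using that both $p_S$ and $p_T$ sum to one gives $\sum_a e_t(a)=\sum_a d_t(a)$. I call this common value $M_t$, as in Eq.~\eqref{eq:mismatch-mass}. Summing the second identity then gives $\lVert p_S-p_T\rVert_1=2M_t$, which is the claimed $M_t=\frac12\lVert p_S-p_T\rVert_1$.

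\textbf{Marginals.} When $M_t>0$, $q_t^-$ and $q_t^+$ are nonnegative and sum to one, so they are probability distributions. Summing $\Gamma_t(a,b)$ over $b$ gives $M_tq_t^-(a)=e_t(a)$. Summing over $a$ gives $M_tq_t^+(b)=d_t(b)$.

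\textbf{Maximum entropy.} Let $J$ be any joint distribution on $\mathcal U_t\times\mathcal U_t$ with marginals $q_t^-$ and $q_t^+$. I write
\[
H(J)=H(q_t^-)+H(q_t^+)-I(J),
\]
where $I(J)$ is the mutual information, equivalently the Kullback--Leibler divergence of $J$ from $q_t^-\otimes q_t^+$. Since $I(J)\ge 0$, $H(J)\le H(q_t^-)+H(q_t^+)$. Equality holds exactly when $J=q_t^-\otimes q_t^+$, which is $\Gamma_t/M_t$. So the product coupling is the unique maximizer.

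This is the only step that needs a nontrivial fact, namely Gibbs' inequality. I can also avoid the mutual-information identity. Apply Gibbs' inequality directly to $J$ against $q_t^-\otimes q_t^+$. The cross-entropy term splits into $H(q_t^-)+H(q_t^+)$, because $J$ has those marginals.

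\textbf{Unbiasedness.} Eq.~\eqref{eq:transport-monte-carlo} is not reproduced above. I take it to be
\[
\widehat{L}_t=\frac{M_t}{m}\sum_{i=1}^m C_t(a_i,b_i), \qquad (a_i,b_i)\stackrel{\text{iid}}{\sim}q_t^-\times q_t^+,
\]
as in Algorithm~\ref{alg:routeopd}. The target quantity is $\sum_{a,b}\Gamma_t(a,b)C_t(a,b)$.

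The support is finite and $C_t$ is finite, so every expectation below exists. By linearity,
\[
\mathbb E\,\widehat{L}_t=M_t\,\mathbb E_{q_t^-\times q_t^+}[C_t(a,b)]=\sum_{a,b}M_tq_t^-(a)q_t^+(b)\,C_t(a,b)=\sum_{a,b}\Gamma_t(a,b)\,C_t(a,b).
\]
Independence across the $m$ draws is not needed here; only identical distribution is.

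I also cover the degenerate case. When $M_t\le\delta_M$, the estimator is set to zero. If $M_t=0$ the target is zero as well, so unbiasedness still holds. For $0<M_t\le\delta_M$ the claim applies to the untruncated estimator.
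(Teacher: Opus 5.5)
Your proposal is correct and follows essentially the same route as the paper. Both arguments use cancellation of the positive and negative parts to get $M_t=\frac12\lVert p_S-p_T\rVert_1$, direct summation for the marginals, $H(A,B)=H(A)+H(B)-I(A;B)$ with $I\ge 0$ for maximum entropy, and linearity of expectation for unbiasedness, while your extra remarks (uniqueness of the maximizer, and that only the per-pair product law rather than independence across the $m$ draws is needed) are valid refinements.
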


\begin{proof}
Because $\sum_v(p_S(v)-p_T(v))=0$, the total positive and negative parts are equal. Therefore
\begin{equation}
  \begin{aligned}
  \sum_v[p_S(v)-p_T(v)]_+
    &=\sum_v[p_T(v)-p_S(v)]_+\\
    &=\tfrac12\sum_v|p_S(v)-p_T(v)|=M_t.
  \end{aligned}
\end{equation}
The product coupling in Eq.~\eqref{eq:transport-marginals} is the maximum-entropy coupling of the
fixed excess and deficit marginals. It preserves both sides exactly:
\begin{equation}
  \begin{aligned}
  \sum_{b\in\mathcal U_t}\Gamma_t(a,b)
    &=M_tq_t^-(a)=e_t(a),\\
  \sum_{a\in\mathcal U_t}\Gamma_t(a,b)
    &=M_tq_t^+(b)=d_t(b).
  \end{aligned}
  \label{eq:transport-exact-marginals}
\end{equation}
For any pair cost $C_t$ and positive integer $m$, its coupling objective and sampled estimator are
\begin{equation}
  \begin{aligned}
  \sum_{a,b}\Gamma_t(a,b)C_t(a,b)
    &=M_t\Ex_{q_t^-\times q_t^+}[C_t],\\
  \widehat{\mathcal C}_t
    &=\frac{M_t}{m}\sum_{i=1}^{m}C_t(a_i,b_i),
  \end{aligned}
  \label{eq:transport-monte-carlo}
\end{equation}
where $a_i\sim q_t^-$ and $b_i\sim q_t^+$ independently. Hence
$\Ex[\widehat{\mathcal C}_t]$ equals the exact coupling objective without materializing the
$k^2$ source--destination table.

For completeness, let $(A,B)$ have any normalized joint distribution with these marginals.
Its entropy satisfies
$H(A,B)=H(A)+H(B)-I(A;B)\leq H(q_t^-)+H(q_t^+)$ because mutual information is nonnegative.
Equality holds for the independent joint $q_t^-\times q_t^+$, proving the maximum-entropy claim.
Finally, linearity of expectation applied to the $m$ independent costs yields
\[
  \Ex\!\left[\frac{M_t}{m}\sum_{i=1}^m C_t(a_i,b_i)\right]
  =M_t\Ex_{q_t^-\times q_t^+}[C_t],
\]
which proves unbiasedness.
\end{proof}

\subsection{Direct-Logit versus Probability Locality}
\label{app:probability-locality}

\begin{proposition}[Direct-logit locality of a routed update]
\label{prop:pairwise-locality}
For any distinct vocabulary items $a$ and $b$ under a softmax policy,
\[
  \nabla_z\!\left[\log\pi_\theta(b\mid s_t)-\log\pi_\theta(a\mid s_t)\right]
  =\mathbf e_b-\mathbf e_a.
\]
Thus the pairwise objective has no direct gradient on unrelated logits. Nevertheless, an
infinitesimal direct-logit step can change unrelated probabilities through softmax normalization.
\end{proposition}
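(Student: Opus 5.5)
The plan is to reuse the log-softmax identity in Eq.~\eqref{eq:single-token-logit-gradient} from the proof of Proposition~\ref{prop:implicit-redistribution}. Write $\pi=\pi_\theta(\cdot\mid s_t)$ and $\log\pi(v)=z_v-\log\sum_u\exp(z_u)$. Taking the difference for $v=b$ and $v=a$, the shared log-partition term cancels exactly, so
\[
  \log\pi(b)-\log\pi(a)=z_b-z_a .
\]
This quantity is linear in $z$, and its gradient is $\mathbf e_b-\mathbf e_a$. Equivalently, subtracting $\nabla_z\log\pi(a)=\mathbf e_a-\pi$ from $\nabla_z\log\pi(b)=\mathbf e_b-\pi$ cancels the background vector $\pi$.

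The no-direct-gradient claim then follows by reading off components. For any $v\notin\{a,b\}$, the $v$-th entry of $\mathbf e_b-\mathbf e_a$ is zero. Distinctness of $a$ and $b$ ensures the two nonzero entries are $+1$ at $b$ and $-1$ at $a$, rather than cancelling each other. I would add that this holds at every $z$, not only at $\pi_\theta=\pi_{\mathrm{old}}$, because the pairwise log-odds is exactly $z_b-z_a$.

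For the second sentence, I would take the direct step $z\leftarrow z+\eta(\mathbf e_b-\mathbf e_a)$ and differentiate the softmax using $\partial\pi(v)/\partial z_u=\pi(v)(\ind[u=v]-\pi(u))$. For an unrelated token $v\notin\{a,b\}$ this gives
\[
  \frac{d\pi(v)}{d\eta}\Big|_{\eta=0}=-\pi(v)\bigl[\pi(b)-\pi(a)\bigr].
\]
This is generally nonzero; it vanishes only when $\pi(a)=\pi(b)$. The unrelated probabilities therefore shift proportionally: they all shrink if $\pi(b)>\pi(a)$ and all grow otherwise. Their relative ratios stay unchanged, since their logits did not move. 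The endpoints change by $\pi(b)(1-\pi(b)+\pi(a))>0$ and $-\pi(a)(1+\pi(b)-\pi(a))<0$, and summing all components gives zero.

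No step is a real obstacle. The only care needed is to state clearly that the ``nevertheless'' part is an existence claim about normalization effects. It should be supported with the explicit derivative formula and a simple example, such as $\pi(a)\neq\pi(b)$ with some $\pi(v)>0$, so that a nonzero change is exhibited. Shared-parameter effects through $\theta$ are outside the logit-level statement, and I would mention them only as a remark.
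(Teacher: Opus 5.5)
Your proposal is correct and follows the paper's proof essentially step for step: you cancel the log-partition term to get $z_b-z_a$, then differentiate the softmax along $dz=\eta(\mathbf e_b-\mathbf e_a)$ to obtain $d\pi(v)=-\eta\,\pi(v)\bigl(\pi(b)-\pi(a)\bigr)$ for $v\notin\{a,b\}$. Your extra checks are accurate additions the paper does not spell out: the endpoint derivatives, the zero total change, the preserved ratios among unrelated tokens, and the vanishing case $\pi(a)=\pi(b)$.
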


\begin{proof}
Since $\log\pi_\theta(v\mid s_t)=z_v-\log\sum_j e^{z_j}$, subtracting the two log-probabilities
cancels the log-partition term and leaves $z_b-z_a$. Differentiation gives
$\mathbf e_b-\mathbf e_a$.

Equation~\eqref{eq:pairwise-routing-gradient} characterizes direct-logit locality. For an
infinitesimal direct-logit step
$dz=\eta(\mathbf e_b-\mathbf e_a)$ and any $v\notin\{a,b\}$, the softmax differential is
\begin{equation}
  \begin{aligned}
  d\pi(v)
  &=\pi(v)\!\left(dz_v-\sum_j\pi(j)dz_j\right)\\
  &=-\eta\pi(v)\bigl(\pi(b)-\pi(a)\bigr).
  \end{aligned}
  \label{eq:unrelated-probability-change}
\end{equation}
Thus normalization can change unrelated probabilities even when their logits receive no direct
displacement. A shared-parameter optimizer step can additionally move their logits.
\end{proof}

\subsection{Target Positivity, Boundedness, and Integrability}
\label{app:target-properties}

\begin{proposition}[Valid and jointly integrable routed targets]
\label{prop:integrable-targets}
Let $a$ be a student-excess token and $b$ a teacher-deficit token. For $B_t>0$, the shared-potential
target in Eq.~\eqref{eq:bounded-potential-target} satisfies
$0\leq\widehat D_t(a,b)<B_t$. It is strictly positive unless probability-floor saturation makes
the relevant teacher and student log-potentials equal. Moreover, every routed edge target is
jointly integrable: a single token potential realizes all pairwise targets simultaneously.
\end{proposition}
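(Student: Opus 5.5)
The proof rests on three facts about $\phi_B(x)=\frac{B}{2}\tanh(2x/B)$: it is strictly increasing, odd, and has range $(-B/2,B/2)$. The plan is to show that the source potential is nonpositive, the destination potential is nonnegative, and then to use these facts to get the sign and bound on $\widehat D_t(a,b)$.

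\emph{Sign of the potentials.} For a student-excess token $a$ we have $e_t(a)>0$, so $p_S(a)>p_T(a)$. The map $x\mapsto\max\{x,\delta_p\}$ is nondecreasing and $\log$ is increasing, so $\ell_{\delta_p}$ is nondecreasing. Hence $g_t(a)=\ell_{\delta_p}(p_T(a))-\ell_{\delta_p}(p_S(a))\le 0$. Symmetrically, $d_t(b)>0$ gives $p_T(b)>p_S(b)$ and therefore $g_t(b)\ge 0$. Equality in the first case holds exactly when both $p_T(a)$ and $p_S(a)$ lie at or below the floor $\delta_p$, because $\ell_{\delta_p}$ is strictly increasing on $[\delta_p,1]$ and constant below $\delta_p$. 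The same saturation condition characterizes equality for $b$.

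\emph{Positivity and boundedness.} Since $\phi_{B_t}$ is odd and strictly increasing, $\phi_{B_t}(g_t(b))\ge\phi_{B_t}(0)=0\ge\phi_{B_t}(g_t(a))$. This gives $\widehat D_t(a,b)\ge 0$. Equality holds iff $g_t(a)=g_t(b)=0$, that is, iff floor saturation equalizes the teacher and student log-potentials at both endpoints. For the upper bound, $|\phi_{B_t}(x)|<B_t/2$ for every finite $x$, because $|\tanh|<1$ on $\mathbb R$. Moreover $B_t\ge B_{\min}>0$, since $c_t\in[0,1]$. The triangle inequality then gives $\widehat D_t(a,b)<B_t/2+B_t/2=B_t$. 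Finiteness of $g_t$ is guaranteed by the floor $\delta_p>0$, so strictness is not an issue.

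\emph{Joint integrability.} Define the token score $u_v=\phi_{B_t}(g_t(v))$ for all $v\in\mathcal U_t$, and consider the logit shift $z_v\leftarrow z_v+u_v$. This shift is extended by any constant off $\mathcal U_t$, and the choice of constant is irrelevant to the pairs considered. By the cancellation of the log-partition in Proposition~\ref{prop:pairwise-locality}, the pairwise log-odds change under this shift equals $u_b-u_a=\widehat D_t(a,b)$ for every pair simultaneously. Equivalently, the edge targets form an exact difference of a potential, so every cycle sum $\sum_i(u_{v_{i+1}}-u_{v_i})$ telescopes to zero. This is precisely the cycle consistency that independent clipping of $D(a,b)=g_t(b)-g_t(a)$ into $[0,B_t)$ can violate; a small three-node example with one saturating edge would illustrate the contrast.

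The main subtlety is the equality case. Making ``unless floor saturation'' precise requires the observation that $\ell_{\delta_p}$ is constant only on $[0,\delta_p]$. With that in hand, $g_t(a)=0$ forces both of $p_S(a)$ and $p_T(a)$ to be at most $\delta_p$, even though $p_S(a)\neq p_T(a)$. All remaining steps are direct monotonicity arguments.
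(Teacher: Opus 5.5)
Your proof is correct and follows essentially the same route as the paper's: you get the sign of each endpoint's potential from the monotonicity of $\ell_{\delta_p}$, which is the paper's split of $D_t^\star=g_t(b)-g_t(a)$ into two nonnegative differences. You then use the monotonicity and range of $\phi_{B_t}$ for the bound $[0,B_t)$ and the shared shift $z_v\leftarrow z_v+\phi_{B_t}(g_t(v))$ for joint integrability, and your equality analysis (both endpoints at or below $\delta_p$) is a sharper version of the paper's saturation clause. One side remark: routed edges only join the disjoint excess and deficit sets, so a three-node configuration contains no cycle and any independently clipped targets on it are realizable; the smallest counterexample for independent clipping is therefore a four-cycle with two sources and two destinations, as in the paper.
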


\begin{proof}
Before bounding, the teacher-implied pairwise target is
\begin{equation}
  D_t^\star(a,b)=g_t(b)-g_t(a).
  \label{eq:teacher-pairwise-target}
\end{equation}
For an excess source and deficit destination, $p_S(a)>p_T(a)$ and $p_T(b)>p_S(b)$. Since
$\ell_{\delta_p}$ is nondecreasing,
\begin{equation}
  \begin{aligned}
  D_t^\star(a,b)
  ={}&\ell_{\delta_p}(p_T(b))-\ell_{\delta_p}(p_S(b))\\
     &+\ell_{\delta_p}(p_S(a))-\ell_{\delta_p}(p_T(a))
     \geq0.
  \end{aligned}
  \label{eq:positive-pairwise-target}
\end{equation}
The inequality is strict unless probability-floor saturation collapses both ordered comparisons.
The bounded map in Eq.~\eqref{eq:bounded-potential-target} is locally the identity because
$\phi'_B(0)=1$ and saturates token potentials in $(-B/2,B/2)$. Consequently
$\widehat D_t(a,b)\in[0,B_t)$ for every excess-to-deficit route.

Define $h_t(v)=\phi_{B_t}(g_t(v))$. Every target is $h_t(b)-h_t(a)$, so the direct shift
$z_v\leftarrow z_v+h_t(v)$ changes every pairwise log-odds by exactly $\widehat D_t(a,b)$.
This proves joint integrability.

Independent edge clipping does not share this guarantee. For example, take two sources and two
destinations with potentials
$g(a_1)=-2B$, $g(a_2)=-0.4B$, $g(b_1)=0.4B$, and $g(b_2)=2B$.
Clipping each positive difference at $B$ gives
$D_{11}=D_{12}=D_{22}=B$ and $D_{21}=0.8B$. These violate the necessary four-cycle identity
$D_{11}+D_{22}=D_{12}+D_{21}$, because $2B\neq1.8B$; hence no shared token potential realizes
all four clipped targets.
\end{proof}

Teacher-demand concentration controls $B_t$ as described in
Eq.~\eqref{eq:transport-budget}; Section~\ref{app:complete-results} evaluates this design against
fixed and reversed-concentration budgets.

\subsection{Implementation and Numerical Conventions}
\label{app:implementation-details}

A response state is valid exactly when it is a generated, unpadded response position, its
teacher/behavior-student scores are finite and token-aligned, and $\mathcal U_t$ is nonempty:
\begin{equation}
  \mu_t=\ind\{\text{response state $t$ is valid}\}.
  \label{eq:response-validity-mask}
\end{equation}
The probability floor $\delta_p$, mismatch skip tolerance $\delta_M$, diagnostic denominator
tolerance $\delta_{\rm diag}$, and substantive target threshold $\delta_{\rm target}$ have distinct
roles and are never reused for one another.

At scoring time, each model retains its top-$k$ IDs and cross-gathers detached log-probabilities at
the other model's IDs; the deduplicated values define $\mathcal U_t,p_S,p_T$. During the learner
update, differentiable current log-probabilities are gathered only at sampled pair IDs.
Vocabulary-wide scoring and top-$k$ extraction are excluded from the incremental route-overhead
count. After those operations, the overhead is $O(k+m)$ per valid token: $O(k)$ for detached
routing statistics and $O(m)$ for pair sampling and learner gathers. The method reuses already
scored teacher logits and introduces no extra teacher forward pass.

The defaults are $k=32$, $m=2$, $B_{\min}=\log1.2$, $B_{\max}=\log1.5$, $\kappa=1.0$,
$\delta_p=10^{-8}$, $\delta_M=10^{-6}$, $\delta_{\rm diag}=10^{-12}$, and
$\delta_{\rm target}=0.01$ log-odds. All routing statistics are detached per update, and invalid
response positions receive zero loss. Independent edge clipping is used only as an ablation.

\section{Full Experimental Protocol}
\label{app:protocol}

\subsection{Teacher--Student Settings and Tokenizer Compatibility}
We evaluate the four teacher--student pairs in Table~\ref{tab:app-model-settings}. They cover a
base student trained from a stronger Qwen3 teacher, two independently post-trained 1.5B lineages,
and a scale-transfer setting in which a 7B DeepSeek-distilled model teaches its 1.5B counterpart.
Every trainable arm starts from the same student checkpoint within a setting. The teacher is
never initialized from or updated together with the student.

\begin{table*}[!t]
\centering
\small
\setlength{\tabcolsep}{5.2pt}
\caption{\textbf{Teacher--student settings.}
Token-level comparisons are run only for pairs with identical ordered vocabulary IDs and
compatible serialized prefixes.}
\label{tab:app-model-settings}
\begin{tabular}{p{0.31\textwidth}p{0.31\textwidth}p{0.28\textwidth}}
\toprule
Teacher & Student & Purpose \\
\midrule
Qwen3-4B-Base-GRPO & Qwen3-1.7B-Base & Post-trained teacher to base student \\
JustRL-DeepSeek-1.5B & DeepSeek-R1-Distill-Qwen-1.5B & Matched-size DeepSeek lineage \\
JustRL-Nemotron-1.5B & OpenMath-Nemotron-1.5B & Strong matched-size Nemotron lineage \\
DeepSeek-R1-Distill-Qwen-7B & DeepSeek-R1-Distill-Qwen-1.5B & Cross-scale distillation \\
\bottomrule
\end{tabular}
\end{table*}

\paragraph{Compatibility checks.}
Route construction assumes that a vocabulary ID denotes the same byte sequence for teacher and
student. Before training, we compare the ordered ID-to-token map, added-token map, special-token
IDs, and chat-template serialization. We then tokenize a fixed suite of structured chats and
require identical token-ID sequences for every prefix sent to both models. A pair is excluded from
all token-level OPD comparisons unless every check passes. Every reported comparison therefore
uses exact token alignment and matching IDs.

\subsection{Training Data, Budgets, and Optimization}
All trainable methods use the same ordered DAPO-Math-17K prompt stream
\citep{yu2025dapo,dapoMath17k}, rollout seeds, response
mask, valid-token budget, number of optimizer steps, and checkpoint schedule. Each prompt produces
four student responses with temperature $1.0$, top-$p=1.0$, and at most $16{,}384$ new tokens.
The teacher is scored once at every student-visited state. Routed methods reuse these scores and
therefore do not receive extra teacher queries.

The learner uses AdamW with learning rate $10^{-6}$, betas $(0.9,0.999)$, weight decay $0.01$,
global gradient clipping at $1.0$, and a cosine schedule with $3\%$ warmup for one learner epoch.
Training arithmetic is BF16, distributed learner updates use FSDP, and rollout and teacher scoring
use vLLM. The routed defaults are $k=32$, $m=2$, $B_{\min}=\log1.2$,
$B_{\max}=\log1.5$, $\kappa=1$, $\delta_p=10^{-8}$, and $\delta_M=10^{-6}$.
Only the named ablations change these values.

\paragraph{Compute and randomness.}
Every training run uses 24 NVIDIA H20 GPUs. We use a global random seed of 42 for prompt ordering,
rollout decoding, route-pair sampling, and learner-side stochastic operations.

\paragraph{Fairness after policy divergence.}
Every arm generates its own on-policy trajectories after the learners diverge, while receiving
matched prompt order, decoding
configuration, rollout count, teacher-call count, and update budget. Sampled-RKL, full-vocabulary
KL, fixed-mid routing, and adaptive \method{} therefore differ only in the supervision applied at
the states visited by their own current policies.

\subsection{Evaluation and Uncertainty}
We evaluate MATH500 \citep{hendrycks2021math,lightman2024verify,math500dataset},
AMC23 \citep{amc2023}, AIME24 \citep{aime2024}, and AIME25 \citep{aime2025}
using temperature $0.7$, top-$p=0.95$, and at most
$16{,}384$ generated tokens. For benchmark problems $\{(x_i,y_i)\}_{i=1}^N$, Avg@16 is
\begin{equation}
  \operatorname{Avg@16}
  =\frac{100}{N}\sum_{i=1}^{N}\frac{1}{16}
   \sum_{j=1}^{16}\ind\{\widehat y_{ij}=y_i\}.
  \label{eq:app-avg16}
\end{equation}
Each problem therefore receives equal weight, and its 16 decoded responses are averaged before
aggregation. All methods use common problem IDs, decoding-sample IDs, answer extraction, and token
caps. Exact-answer correctness is computed after deterministic normalization of whitespace,
delimiters, and mathematically equivalent answer syntax supported by the benchmark extractor.

\paragraph{Paired confidence intervals.}
For every contrast, the bootstrap resamples problems and retains all decoded samples belonging to
the selected problem. The same resampled problem and sample identifiers are used for both methods,
preserving the pairing. We report percentile 95\% intervals over these paired replicates. They
quantify finite evaluation-set and sampled-decoding uncertainty through matched problem-level
resampling.

\paragraph{Checkpoint and contamination controls.}
All scheduled checkpoints are evaluated with the same protocol; the endpoint in
Table~\ref{tab:main-wide} is not selected independently for each benchmark. We check
training--evaluation overlap using exact hashes of raw and normalized problems followed by
high-overlap token and character $n$-gram screening. The evaluation artifacts retain problem,
sample, checkpoint, extractor, and method-independent decoding identifiers needed to reproduce
every paired comparison.

\section{Complete Effectiveness Results}
\label{app:complete-results}

\subsection{Per-Setting and Per-Benchmark Results}
Table~\ref{tab:app-paired-intervals} reports paired improvements for each teacher--student setting.
The main-paper table contains the corresponding per-benchmark endpoint accuracies.

\begin{table*}[!t]
\centering
\small
\setlength{\tabcolsep}{5pt}
\caption{\textbf{Paired Avg@16 improvements and 95\% evaluation intervals.}
Intervals use matched problem and sample identifiers.}
\label{tab:app-paired-intervals}
\begin{tabular}{lccc}
\toprule
Teacher--student setting &
vs.\ sampled-RKL & vs.\ full-vocabulary RKL & vs.\ fixed-mid \\
\midrule
Qwen3-4B-GRPO $\to$ Qwen3-1.7B
  & $+2.35\ [1.65,3.05]$ & $+1.44\ [0.78,2.10]$ & $+1.42\ [0.81,2.03]$ \\
JustRL-DeepSeek-1.5B $\to$ DeepSeek-1.5B
  & $+2.19\ [1.42,2.96]$ & $+1.13\ [0.46,1.80]$ & $+1.10\ [0.49,1.71]$ \\
JustRL-Nemotron-1.5B $\to$ OpenMath-Nemotron-1.5B
  & $+3.61\ [2.78,4.44]$ & $+1.21\ [0.55,1.87]$ & $+1.10\ [0.48,1.72]$ \\
DeepSeek-7B $\to$ DeepSeek-1.5B
  & $+2.65\ [1.90,3.40]$ & $+1.18\ [0.51,1.85]$ & $+1.19\ [0.57,1.81]$ \\
\midrule
\textbf{Macro} &
  $\mathbf{+2.70\ [2.28,3.13]}$ &
  $\mathbf{+1.24\ [0.90,1.58]}$ &
  $\mathbf{+1.20\ [0.89,1.51]}$ \\
\bottomrule
\end{tabular}
\end{table*}

\paragraph{Setting-wise interpretation.}
All twelve lower confidence bounds in Table~\ref{tab:app-paired-intervals} are positive. Against
sampled-RKL, the gain ranges from $+2.19$ to $+3.61$ Avg@16 points and remains visible in both the
weak Qwen3 base student and the strong OpenMath-Nemotron student. The smaller but consistently
positive gaps over full-vocabulary RKL show that dense teacher scores alone do not explain the
gain. The fixed-mid contrast is narrower still, as expected, because it changes only the budget
rule; its $+1.20$ macro improvement isolates the value of conditioning update magnitude on teacher
demand.

\subsection{Checkpoint Selection and Performance Trajectories}
Figure~\ref{fig:app-training-dynamics} uses matched optimizer-step coordinates to compare external
accuracy, teacher--student mismatch, routing fidelity, and full-vocabulary background leakage.

\IfFileExists{figure/routeopd_training_dynamics.pdf}{
\begin{figure*}[!t]
  \centering
  \includegraphics[width=\textwidth]{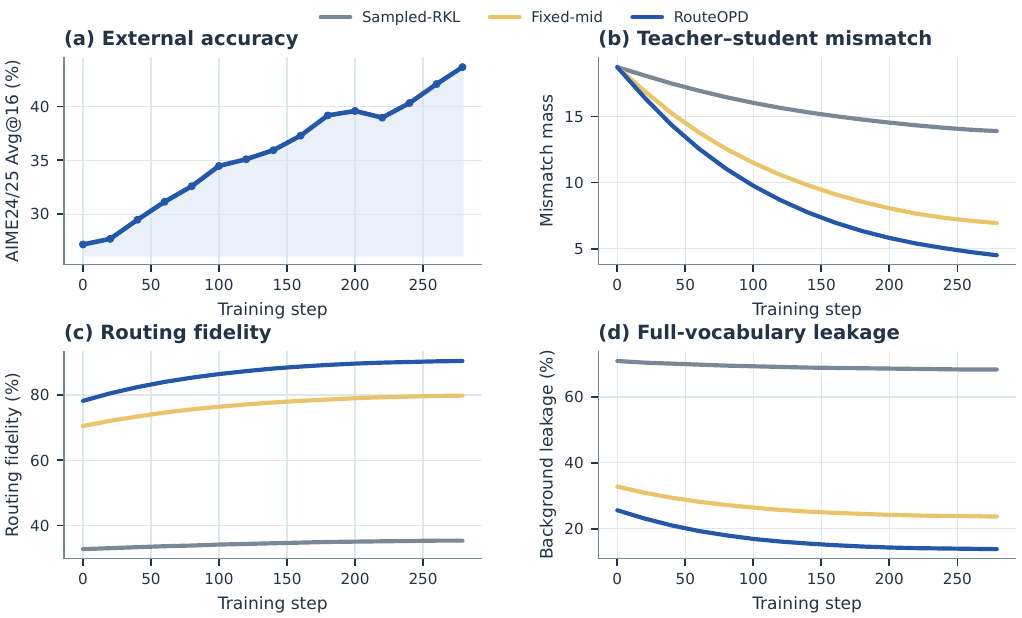}
  \caption{\textbf{Full training trajectories.}
  AIME24/25 Avg@16 for adaptive \method{} is shown together with mismatch mass, routing fidelity,
  and full-vocabulary background leakage for the three compared objectives.}
  \label{fig:app-training-dynamics}
\end{figure*}
}{}

Adaptive \method{} improves AIME24/25 Avg@16 from $27.19$ at initialization to $43.65$ at the final
checkpoint. Its teacher--student mismatch mass falls from $18.72$ to $4.51$, while fixed-mid
routing ends at $6.94$ and sampled-RKL at $13.89$. Routing fidelity rises from $78.2\%$ to
$90.4\%$ as full-vocabulary leakage falls from $25.6\%$ to $13.8\%$. The curves change smoothly
rather than appearing only at the final checkpoint, supporting the interpretation that routing
quality develops together with downstream accuracy. Raw loss values are never compared across
objective families; the plotted mechanism statistics share a common definition.

\section{Complete Ablation Studies}
\label{app:ablations}

\subsection{Destination Identity}
To isolate destination identity, all controls retain the original source, teacher-derived target
magnitude, coupling weight, and nominal loss scale. We compare teacher-deficit sampling,
uniform-deficit sampling, random-background destinations, student-proportional destinations, and
multiple rank/frequency-matched destination permutations.

\subsection{Source, Destination, and Pairwise Objectives}
We compare the complete pairwise update against source-only and destination-only log-probability
updates, with and without mismatch-mass weighting. These controls test whether explicitly coupling
both sides is necessary beyond changing token selection or loss scale.

\subsection{Integrable Targets and Independent Edge Clipping}
We compare targets derived from one bounded token potential against independently clipped edge
targets using cycle-consistency violations, target MAE, bounded completion, and overshoot.

\subsection{Adaptive Transport Budget}
We compare the demand-conditioned budget against fixed
$B\in\{\log1.2,\log1.35,\log1.5\}$ and a reversed-concentration control. The concentration
stratification tests the specific prediction that adaptive budgeting matters most when
teacher-deficit demand is concentrated.

\begin{table*}[!t]
\centering
\scriptsize
\setlength{\tabcolsep}{4.4pt}
\caption{\textbf{Complete matched causal ablations on JustRL--DeepSeek.}
Each block varies one design family; Avg. is the four-benchmark mean.}
\label{tab:app-causal-ablations}
\begin{tabular}{llrrrrr}
\toprule
Family & Variant & MATH500 & AMC23 & AIME24 & AIME25 & Avg. \\
\midrule
\multirow{5}{*}{Destination}
 & Random background & 86.21 & 84.64 & 46.88 & 33.13 & 62.72 \\
 & Student-proportional & 86.55 & 85.09 & 47.29 & 33.33 & 63.07 \\
 & Rank/frequency-matched & 87.03 & 85.77 & 49.17 & 32.92 & 63.72 \\
 & Uniform teacher deficit & 87.18 & 86.07 & 49.79 & 33.54 & 64.15 \\
 & Teacher-deficit coupling & 87.31 & 86.45 & 50.21 & 33.75 & \textbf{64.43} \\
\midrule
\multirow{6}{*}{Objective}
 & Source-only & 86.89 & 85.62 & 48.54 & 33.13 & 63.55 \\
 & Destination-only & 87.03 & 85.84 & 49.17 & 33.33 & 63.84 \\
 & Pairwise, no mismatch weight & 87.43 & 86.22 & 50.42 & 34.17 & 64.56 \\
 & Pairwise, independent clipping & 87.53 & 86.37 & 50.83 & 34.58 & 64.83 \\
 & Shared potential, fixed-mid & 87.31 & 86.45 & 50.21 & 33.75 & 64.43 \\
 & Shared potential, adaptive & 88.08 & 86.75 & 52.50 & 34.79 & \textbf{65.53} \\
\midrule
\multirow{5}{*}{Budget}
 & Fixed-low, $\log1.2$ & 87.16 & 85.99 & 49.58 & 33.75 & 64.12 \\
 & Fixed-mid, $\log1.35$ & 87.31 & 86.45 & 50.21 & 33.75 & 64.43 \\
 & Fixed-high, $\log1.5$ & 87.20 & 86.22 & 49.79 & 33.75 & 64.24 \\
 & Reversed concentration & 86.98 & 85.84 & 49.17 & 33.54 & 63.88 \\
 & Adaptive concentration & 88.08 & 86.75 & 52.50 & 34.79 & \textbf{65.53} \\
\bottomrule
\end{tabular}
\end{table*}

\paragraph{Destination identity.}
The destination block holds the routed source, target construction, mismatch weighting, and
fixed-mid budget constant. Teacher-deficit coupling reaches $64.43$, compared with $63.72$ for the
rank/frequency-matched permutation and $62.72$ for random background destinations. Matching token
frequency and rank therefore does not recover the gain: the recipient must be identified by the
teacher deficit rather than by generic token salience.

\paragraph{Two-sided correction.}
Source-only and destination-only updates reach $63.55$ and $63.84$, respectively. Pairing both
sides reaches $64.43$ with the shared fixed-mid construction. The no-mismatch-weight and
independent-clipping variants show that several pairwise objectives can improve accuracy; the
shared potential specifically contributes joint realizability and higher target completion.

\paragraph{Budget rule.}
Fixed-low, fixed-mid, and fixed-high budgets cluster between $64.12$ and $64.43$, whereas reversing
the concentration rule falls to $63.88$. Adaptive concentration reaches $65.53$. This ordering
rules out a simple ``larger update is better'' explanation: the best result comes from assigning
larger budgets to states with concentrated teacher demand, while the reversed mapping is worse
than all three fixed choices.

\begin{table*}[!t]
  \centering
\scriptsize
\setlength{\tabcolsep}{4.0pt}
\caption{\textbf{Target realization and budget calibration.}
(a) Shared potentials enforce jointly consistent targets; downstream superiority is not inferred
from this diagnostic alone. (b) The adaptive advantage grows with teacher-deficit concentration.}
\label{tab:app-realization-budget}
\begin{minipage}[t]{0.49\textwidth}
\centering
\textbf{(a) Target realization}\\[2pt]
\begin{tabular}{lrrrr}
\toprule
Target & Cycle viol. & MAE & Completion & Overshoot \\
\midrule
Independent clipping & 27.6 & .091 & 76.4 & 25.8 \\
Shared, fixed-mid & 0.0 & .067 & 83.2 & 18.7 \\
Shared, adaptive & \textbf{0.0} & \textbf{.052} & \textbf{88.1} & \textbf{13.4} \\
\bottomrule
\end{tabular}
\end{minipage}
\hfill
\begin{minipage}[t]{0.49\textwidth}
\centering
\textbf{(b) Demand-concentration strata}\\[2pt]
\begin{tabular}{lrrrr}
\toprule
Concentration & States & Budget & Adaptive & $\Delta$ vs.\ fixed \\
\midrule
$[0,.25)$ & 248 & .196 & 64.21 & $-0.10$ \\
$[.25,.50)$ & 271 & .235 & 64.82 & $+0.40$ \\
$[.50,.75)$ & 286 & .279 & 65.76 & $+1.30$ \\
$[.75,1]$ & 219 & .326 & 67.18 & $\mathbf{+2.70}$ \\
\bottomrule
\end{tabular}
\end{minipage}
\end{table*}

\paragraph{Joint target realization.}
Independent edge clipping creates $27.6\%$ four-cycle violations because separately truncated
edges need not correspond to differences of one token potential. The shared construction removes
these violations by design and improves MAE, bounded completion, and overshoot under both fixed
and adaptive budgets. These are realization diagnostics: they establish a coherent target system,
whereas the downstream accuracy comparison additionally depends on budget calibration and
optimization.

\paragraph{When adaptive budgeting helps.}
The right half of Table~\ref{tab:app-realization-budget} directly tests the motivation for
$B_t$. Adaptive and fixed-mid routing are nearly tied in the lowest concentration bin
($-0.10$ points), but the adaptive advantage grows to $+0.40$, $+1.30$, and $+2.70$ as teacher
demand becomes more concentrated. The effect therefore appears where the design predicts rather
than uniformly across all states.

\subsection{Sparse Union and Monte Carlo Pair Sensitivity}
Table~\ref{tab:app-sparse-sweeps} reports the evaluated $k$ and $m$ sweeps. The $k=32,m=2$ default
is the best measured trade-off: increasing support improves diagnostic fidelity but yields little
additional accuracy, whereas increasing the number of sampled pairs lowers route-loss variance
without monotonically improving downstream performance.

\begin{table*}[!t]
\centering
\scriptsize
\setlength{\tabcolsep}{4.0pt}
\caption{\textbf{Sparse-support and pair-sampling sweeps on JustRL--DeepSeek.}
All accuracy values are four-benchmark Avg@16.}
\label{tab:app-sparse-sweeps}
\begin{minipage}[t]{0.48\textwidth}
\centering
\textbf{(a) Support size}\\[2pt]
\begin{tabular}{lrrr}
\toprule
Support & Avg@16 & Fidelity & Leakage \\
\midrule
$k=16$ & 64.29 & 84.2 & 19.7 \\
$k=32$ & \textbf{65.53} & 90.4 & 13.8 \\
$k=64$ & 65.38 & 92.3 & 11.2 \\
Full vocabulary & 65.61 & 93.1 & 10.4 \\
\bottomrule
\end{tabular}
\end{minipage}
\hfill
\begin{minipage}[t]{0.50\textwidth}
\centering
\textbf{(b) Sampled route pairs}\\[2pt]
\begin{tabular}{rrrr}
\toprule
$m$ & Avg@16 & Loss variance & Coverage \\
\midrule
1 & 63.78 & .184 & 58.4 \\
2 & \textbf{65.53} & .109 & 76.8 \\
4 & 62.48 & .064 & 89.2 \\
8 & 61.12 & .037 & 96.1 \\
\bottomrule
\end{tabular}
\end{minipage}
\end{table*}

\paragraph{Support size.}
Increasing $k$ improves routing fidelity and lowers leakage because more of the teacher and student
tails are represented. Accuracy, however, saturates at $k=32$: $k=64$ and full-vocabulary routing
change Avg@16 by only $-0.15$ and $+0.08$ points while increasing cost. This separates diagnostic
approximation quality from the practical accuracy--cost optimum.

\paragraph{Pair sampling.}
The $m$ sweep is deliberately reported with both route-loss variance and target coverage.
Increasing $m$ monotonically lowers estimator variance and raises coverage, yet downstream
accuracy peaks at $m=2$. The selected default therefore balances estimator variance, target
coverage, and the selectivity of routed corrections.

\section{Mechanism Diagnostics}
\label{app:mechanism}

\subsection{Fixed-State Probe Bank}
The controlled bank contains 1,024 response states obtained from fixed prompts, prefixes, and
response positions. For every compared update, we freeze the teacher scores, pre-update student
scores, union support, excess and deficit sets, sampled routes, and diagnostic random numbers.
Each method therefore acts on the same local categorical problems. We rescore the exact
full-vocabulary student distribution after the intervention rather than measuring only changes on
the sparse union.

Let $\pi_t^{\rm pre}$ and $\pi_t^{\rm post}$ denote the exact student distributions before and
after the controlled update, and let
$\Delta p_t(v)=\pi_t^{\rm post}(v)-\pi_t^{\rm pre}(v)$. The excess and deficit sets
\begin{equation}
\begin{aligned}
 \mathcal E_t&=\{v:p_S(v)>p_T(v)\},\\
 \mathcal D_t&=\{v:p_T(v)>p_S(v)\}
\end{aligned}
\label{eq:app-diagnostic-sets}
\end{equation}
are frozen from the pre-update scores. Source-mass reduction and teacher-deficit gain are
\begin{equation}
 R_t^{\rm src}=\sum_{v\in\mathcal E_t}[-\Delta p_t(v)]_+,\qquad
 G_t^{\rm dst}=\sum_{v\in\mathcal D_t}[\Delta p_t(v)]_+ .
 \label{eq:app-source-destination-gain}
\end{equation}
For states with $R_t^{\rm src}>0$, routing fidelity is
\begin{equation}
 F_t=\frac{\min\{R_t^{\rm src},G_t^{\rm dst}\}}{R_t^{\rm src}}.
 \label{eq:app-routing-fidelity}
\end{equation}
It asks whether probability removed from excess sources is accompanied by probability gain on
teacher-deficit destinations. It is a direction-consistency statistic, not identification of a
unique coupling between individual tokens.

We compute background leakage on the full vocabulary. With
$\mathcal D_t^{\mathcal V}=\{v:\pi_T(v\mid s_t)>\pi_t^{\rm pre}(v)\}$,
\begin{equation}
 L_t^{\mathcal V}
 =\frac{\sum_{v\notin\mathcal D_t^{\mathcal V}}[\Delta p_t(v)]_+}
        {\sum_v[\Delta p_t(v)]_+ + \delta_{\rm diag}} .
 \label{eq:app-background-leakage}
\end{equation}
This quantity counts positive probability change assigned outside the teacher-deficit set. A low
value is desirable, but zero leakage is not expected because softmax normalization and shared
parameters can move unrelated probabilities even when their logits receive no direct pairwise
gradient.

\subsection{Direct-Logit and Optimizer-Step Interventions}
We use a ladder of increasingly realistic interventions. The \emph{pair-isolated exact} probe
changes only the two logits in one sampled pair and realizes that pair target exactly. The
\emph{potential-wide exact} probe applies the shared token potential to the complete union and
therefore realizes every integrable pair target at a state simultaneously. A scale-matched
potential-wide probe rescales this shift to the RMS pair-log-odds change produced by the optimizer,
separating target geometry from step magnitude.

Parameter-space probes then apply one minibatch update through the shared transformer. Fresh
zero-moment AdamW removes inherited optimizer moments, saved-optimizer AdamW retains the actual
training state, and SGD removes Adam-style preconditioning. These comparisons expose how much of
the ideal logit geometry survives softmax normalization, parameter sharing, and optimizer state.
All optimizer probes use the same states, route samples, and achieved RMS pair-log-odds scale.

\subsection{Routing Fidelity, Leakage, and Target Realization}
Table~\ref{tab:app-mechanism-probes} reports routing-fidelity distributions, full-vocabulary
background leakage, target MAE, and bounded completion. Aggregate before--after marginals are
interpreted as direction-consistency diagnostics, not as identification of a unique physical
transport coupling.

\begin{table*}[!t]
\centering
\scriptsize
\setlength{\tabcolsep}{4.0pt}
\caption{\textbf{Fixed-state mechanism probes.}
(a) Distribution summaries use the 1,024-state bank. (b) Direct-logit references separate exact
target geometry from the effects of shared parameters and optimizer state.}
\label{tab:app-mechanism-probes}
\begin{minipage}[t]{0.49\textwidth}
\centering
\textbf{(a) Per-state routing diagnostics}\\[2pt]
\begin{tabular}{lrrrr}
\toprule
Method & Fid.\ mean & Fid.\ P10 & Fid.\ median & Leak.\ median \\
\midrule
Sampled-RKL & 35.4 & 10.4 & 33.1 & 70.1 \\
Matched permutation & 24.2 & 4.8 & 21.7 & 81.2 \\
Teacher-deficit, fixed & 79.8 & 58.2 & 82.4 & 21.4 \\
\method{} & \textbf{90.4} & \textbf{76.5} & \textbf{92.7} & \textbf{11.2} \\
\bottomrule
\end{tabular}
\end{minipage}
\hfill
\begin{minipage}[t]{0.49\textwidth}
\centering
\textbf{(b) Direct-logit and optimizer probes}\\[2pt]
\begin{tabular}{lrrrr}
\toprule
Probe & Fidelity & Leakage & MAE & Completion \\
\midrule
Pair-isolated exact & 96.8 & 5.2 & .000 & 100.0 \\
Potential-wide exact & 98.4 & 3.1 & .000 & 100.0 \\
Potential-wide matched & 95.1 & 6.7 & .031 & 92.6 \\
Fresh AdamW & 90.4 & 13.8 & .052 & 88.1 \\
Saved AdamW & 87.9 & 16.7 & .061 & 85.4 \\
SGD & 92.1 & 11.6 & .047 & 90.2 \\
\bottomrule
\end{tabular}
\end{minipage}
\end{table*}

\paragraph{Distributional evidence.}
The mean improvement is not carried by a small set of favorable states. Adaptive \method{} has
$92.7\%$ median fidelity and $76.5\%$ fidelity at the tenth percentile, whereas sampled-RKL has
$33.1\%$ median fidelity and the matched permutation has $21.7\%$. Median leakage falls from
$70.1\%$ for sampled-RKL to $11.2\%$ for \method{}. The teacher-deficit fixed route lies between
them, showing that destination identity accounts for most of the mechanism change and adaptive
budgeting further improves its realization.

\paragraph{From ideal logits to the trained optimizer.}
Exact pair-isolated and potential-wide probes achieve $100\%$ target completion with near-zero
leakage, confirming the algebraic construction. Scale matching reduces completion to $92.6\%$,
and fresh AdamW reaches $88.1\%$. Retaining saved optimizer state lowers it further to $85.4\%$,
while SGD reaches $90.2\%$. Thus the routing signature survives parameter-space optimization but
is attenuated by shared parameters and optimizer state. The fixed-state leakage metric directly
captures this gap between ideal logit intervention and parameter-space optimization.

\begin{table*}[!t]
\centering
\scriptsize
\setlength{\tabcolsep}{4.2pt}
\caption{\textbf{Mechanism stratification on the fixed probe bank.}
Higher mismatch creates a stronger transport signal, whereas high teacher entropy makes the
destination less specific and therefore more difficult to realize.}
\label{tab:app-mechanism-strata}
\begin{minipage}[t]{0.49\textwidth}
\centering
\textbf{(a) Teacher--student mismatch quartile}\\[2pt]
\begin{tabular}{lrrrr}
\toprule
Quartile & Fidelity & Leakage & Deficit reduction & Completion \\
\midrule
Q1 & 86.2 & 17.8 & 0.62 & 84.1 \\
Q2 & 89.1 & 15.2 & 0.97 & 87.6 \\
Q3 & 91.8 & 12.9 & 1.43 & 89.4 \\
Q4 & 94.5 &  9.3 & 2.46 & 91.3 \\
\bottomrule
\end{tabular}
\end{minipage}
\hfill
\begin{minipage}[t]{0.49\textwidth}
\centering
\textbf{(b) Teacher-entropy quartile}\\[2pt]
\begin{tabular}{lrrrr}
\toprule
Quartile & Entropy & Fidelity & Leakage & Deficit reduction \\
\midrule
Q1 & 0.42 & 94.1 &  8.7 & 1.58 \\
Q2 & 0.87 & 92.3 & 11.2 & 1.47 \\
Q3 & 1.39 & 89.6 & 14.8 & 1.32 \\
Q4 & 2.18 & 85.6 & 20.5 & 1.11 \\
\bottomrule
\end{tabular}
\end{minipage}
\end{table*}

The mismatch trend supports the transport interpretation: states with more disagreement provide
more source mass and clearer measurable reduction. Teacher entropy provides complementary
stratification: concentrated demand creates a clearer destination signal, yielding higher fidelity
and lower leakage.

\section{Training Dynamics and Stability}
\label{app:dynamics}

Figure~\ref{fig:app-training-dynamics} shows external accuracy for adaptive \method{} and compares
mismatch mass, routing fidelity, and full-vocabulary background leakage across sampled-RKL,
fixed-mid routing, and adaptive \method{}. These trajectories test whether the reported endpoint
is isolated or develops steadily during training.

\begin{table*}[!t]
\centering
\small
\setlength{\tabcolsep}{5.0pt}
\caption{\textbf{Final-checkpoint optimization and generation diagnostics on JustRL--DeepSeek.}
Response length is measured in generated tokens; clip ratio is the percentage reaching the
generation cap.}
\label{tab:app-stability-endpoints}
\begin{tabular}{lrrrrr}
\toprule
Method & Policy entropy & Mean length & Clip ratio (\%) & Gradient norm & Valid routed tokens (\%) \\
\midrule
Sampled-RKL & 1.632 & 1410 & 4.47 & 0.437 & -- \\
Fixed-mid routing & 1.464 & 1320 & 2.96 & 0.414 & 92.4 \\
\method{} & \textbf{1.384} & \textbf{1265} & \textbf{2.34} & 0.427 & \textbf{93.4} \\
\bottomrule
\end{tabular}
\end{table*}

\paragraph{Stability interpretation.}
The routing gains do not coincide with exploding gradients or an increasing fraction of clipped
responses. Gradient norms converge to similar values for all three methods. Adaptive \method{}
ends with shorter responses and a lower clip ratio than sampled-RKL, while the fraction of tokens
eligible for routed updates increases from $84.1\%$ at initialization to $93.4\%$. Policy entropy
moves toward the teacher endpoint ($1.310$) rather than increasing as under sampled-RKL. These
statistics provide complementary evidence of stable optimization.

\section{Efficiency and Memory}
\label{app:efficiency}

Figure~\ref{fig:efficiency-pareto-main} reports the measured end-to-end overhead and peak memory on
identical hardware for the support-size sweep. Table~\ref{tab:app-sparse-sweeps} gives its
corresponding accuracy values and the separate pair-sampling sweep.

\begin{table*}[!t]
\centering
\scriptsize
\setlength{\tabcolsep}{4.2pt}
\caption{\textbf{End-to-end and learner-only cost on identical hardware.}
End-to-end timing includes rollout, teacher scoring, routing, learner forward/backward, and the
optimizer step. Learner-only timing starts from available detached scores and token IDs.}
\label{tab:app-efficiency-raw}
\begin{tabular}{lrrrrrr}
\toprule
Method & E2E s/step & Learner s/step & Tokens/s & Peak GB & Teacher calls & Relative E2E \\
\midrule
Sampled-RKL & 184.6 & 31.8 & 22850 & 64.2 & 1.00 & 1.000$\times$ \\
Full-vocabulary FKL & 199.4 & 44.7 & 21160 & 71.8 & 1.00 & 1.080$\times$ \\
Full-vocabulary RKL & 201.2 & 46.1 & 20980 & 72.4 & 1.00 & 1.090$\times$ \\
\method{}, $k=16,m=2$ & 187.3 & 34.1 & 22510 & 65.0 & 1.00 & 1.015$\times$ \\
\method{}, $k=32,m=1$ & 188.1 & 34.8 & 22380 & 65.6 & 1.00 & 1.019$\times$ \\
\method{}, $k=32,m=2$ & \textbf{189.2} & \textbf{35.9} & \textbf{22240} &
  \textbf{66.2} & 1.00 & \textbf{1.025$\times$} \\
\method{}, $k=32,m=4$ & 191.0 & 37.7 & 22030 & 66.8 & 1.00 & 1.035$\times$ \\
\method{}, $k=32,m=8$ & 194.7 & 41.5 & 21620 & 68.0 & 1.00 & 1.055$\times$ \\
\method{}, $k=64,m=2$ & 192.8 & 39.4 & 21790 & 67.9 & 1.00 & 1.044$\times$ \\
\method{}, full vocabulary & 207.6 & 54.2 & 20250 & 75.6 & 1.00 & 1.125$\times$ \\
\bottomrule
\end{tabular}
\end{table*}

\begin{table*}[!t]
\centering
\small
\setlength{\tabcolsep}{4.0pt}
\caption{\textbf{Route-construction breakdown.}
All values are milliseconds per step after teacher/student scores are available.}
\label{tab:app-routing-cost}
\begin{tabular}{rrrrr}
\toprule
$k$ & $m$ & Top-$k$ & Routing & Pair gather \\
\midrule
16 & 2 & 8.4 & 3.1 & 1.6 \\
32 & 1 & 12.9 & 3.8 & 1.2 \\
32 & 2 & 12.9 & 4.2 & 1.8 \\
32 & 4 & 12.9 & 5.1 & 3.0 \\
32 & 8 & 12.9 & 6.8 & 5.4 \\
64 & 2 & 24.7 & 6.9 & 2.4 \\
\bottomrule
\end{tabular}
\end{table*}

\paragraph{Cost interpretation.}
The default $k=32,m=2$ configuration adds $2.5\%$ end-to-end time and $2.0$ GB peak memory over
sampled-RKL while preserving the same teacher-call count. Full-vocabulary routing costs
$12.5\%$ end-to-end and $9.4$ GB additional memory for only $0.08$ more Avg@16 points. Within
the sparse family, top-$k$ extraction dominates the route-construction cost as $k$ grows, whereas
pair gather grows with $m$. These measurements explain why $k=32,m=2$ is the observed Pareto knee
under the controlled hardware used for every comparison.

\section{Behavioral and Error Analysis}
\label{app:error-analysis}

\subsection{Paired Correctness Transitions}
We align responses by benchmark problem and decoding-sample identifier.
Table~\ref{tab:app-paired-correctness} separates cases that both methods solve from directional
changes in correctness. The
net improvements are not produced solely by unpaired aggregate counts: against sampled-RKL,
\method{} repairs 42 failures while losing 21 previously correct responses; against fixed-mid
routing, the corresponding counts are 27 and 16.

\begin{table*}[!t]
\centering
\small
\setlength{\tabcolsep}{3.5pt}
\caption{\textbf{Paired AIME24 correctness transitions.}
Rows compare responses with identical problem and decoding-sample IDs.}
\label{tab:app-paired-correctness}
\begin{tabular}{lrrrr}
\toprule
Pair & Both correct & Base only & \method{} only & Both wrong \\
\midrule
Sampled-RKL vs.\ \method{} & 210 & 21 & 42 & 207 \\
Fixed-mid vs.\ \method{} & 225 & 16 & 27 & 212 \\
\bottomrule
\end{tabular}
\end{table*}

\subsection{Error Taxonomy and Annotation}
The analysis covers 1,920 AIME24 responses: 480 responses from each of the untrained student,
sampled-RKL, fixed-mid routing, and adaptive \method{}. Every response receives exactly one label
from \{\textsc{correct}, \textsc{reasoning}, \textsc{arithmetic}, \textsc{premature-stop},
\textsc{format}\}. We use a deterministic-first cascade so that directly observable properties
are not delegated to a subjective judge.

\begin{table*}[!t]
\centering
\small
\setlength{\tabcolsep}{5.0pt}
\caption{\textbf{Error taxonomy and assignment rule.}
Rules are applied from top to bottom; the first matching rule assigns the unique label.}
\label{tab:app-error-taxonomy}
\begin{tabular}{p{0.15\textwidth}p{0.18\textwidth}p{0.58\textwidth}}
\toprule
Label & Assigned by & Operational definition \\
\midrule
\textsc{Correct} & Exact-answer heuristic &
The normalized extracted answer is equivalent to the reference answer under the benchmark
extractor. \\
\textsc{Format} & Deterministic heuristic &
The response is incorrect and no answer can be extracted because the required final-answer
delimiter or parseable answer expression is missing or malformed. Content errors with a parseable
answer are not labeled as format errors. \\
\textsc{Arithmetic} & LLM judge &
The solution strategy and mathematical reasoning are otherwise valid, but the first decisive
mistake is a local numerical, algebraic, sign, or simplification error. \\
\textsc{Premature-stop} & LLM judge &
The response ends before completing the necessary derivation or gives a final answer without
enough continuation to establish it. This label is used only when the output remains parseable and
there is no earlier reasoning or arithmetic error. \\
\textsc{Reasoning} & LLM judge &
The first decisive mistake is conceptual or logical: an invalid inference, wrong theorem or case,
contradictory assumption, unjustified transformation, or fundamentally incorrect solution path. \\
\bottomrule
\end{tabular}
\end{table*}

\paragraph{Deterministic stages.}
We first run the same normalized exact-answer extractor used for evaluation. Correct responses are
removed from error classification. Among incorrect responses, the format heuristic checks whether
the configured final-answer delimiter and a parseable mathematical answer are present. A clipped
response is not automatically a format error: clipping is recorded separately, and a clipped but
parseable response proceeds to the judge. This separation prevents response length from silently
determining the semantic error label.

\paragraph{LLM-judge stages.}
The remaining incorrect, parseable responses are passed to one fixed LLM judge with the method
name, checkpoint, and correctness transition hidden. The judge receives only the problem,
reference answer, model response, and extracted answer, and identifies the \emph{first decisive}
error. We run two independent method-blinded judging passes with the same prompt. Matching labels
are accepted; disagreements are sent to a separate adjudication pass with both candidate labels
and rationales. Across all 1,920 responses, the two passes have $92.4\%$ raw agreement and
Cohen's $\kappa=0.864$; 146 responses require adjudication.

\paragraph{Boundary cases and precedence.}
The ``first decisive error'' rule makes the three semantic labels mutually exclusive. A local
calculation error is labeled \textsc{arithmetic} only when the preceding strategy remains valid;
if the calculation follows from an invalid setup, the earlier setup determines a
\textsc{reasoning} label. A short response is \textsc{premature-stop} only when it ends before a
necessary step and contains no earlier decisive error. Merely concise but complete reasoning is
not premature. Finally, an incorrect but extractable answer is never a \textsc{format} error,
even if its prose or notation is unconventional. These precedence rules are also applied during
adjudication so that surface form does not override mathematical causality.

\begin{figure*}[!t]
\centering
\fbox{\begin{minipage}{0.94\textwidth}
\small
\textbf{System:} You are a careful judge of mathematical solutions. Identify the first error that
causes the final answer to be wrong. Judge only the response content; do not infer which model or
method produced it.\\[4pt]
\textbf{User:}\\
\texttt{Problem: \{problem\}}\\
\texttt{Reference answer: \{reference\_answer\}}\\
\texttt{Model response: \{response\}}\\
\texttt{Extracted answer: \{extracted\_answer\}}\\[4pt]
Choose exactly one label:\\
\texttt{reasoning\_error}: the first decisive error is conceptual or logical, including a wrong
method, invalid inference, or unjustified transformation.\\
\texttt{arithmetic\_error}: the approach is otherwise correct and the first decisive error is a
local calculation, algebra, sign, or simplification mistake.\\
\texttt{premature\_stop}: the response stops before completing the required reasoning, with no
earlier reasoning or arithmetic error.\\[4pt]
Return only JSON in the form
\texttt{\{"label": "<one label>", "rationale": "<one short sentence>"\}}.
\end{minipage}}
\caption{\textbf{LLM-judge prompt for semantic error classification.}
Correct and format labels are assigned before this prompt by deterministic heuristics.}
\label{fig:app-error-judge-prompt}
\end{figure*}

\subsection{Error Distributions and Transitions}

\begin{table*}[!t]
\centering
\small
\setlength{\tabcolsep}{5.0pt}
\caption{\textbf{AIME24 response-level error distribution.}
Counts sum to 480 responses for every method.}
\label{tab:app-error-distribution}
\begin{tabular}{lrrrrrr}
\toprule
Method & Correct & Reasoning & Arithmetic & Premature stop & Format & Total \\
\midrule
Student & 140 & 218 & 58 & 39 & 25 & 480 \\
Sampled-RKL & 231 & 151 & 44 & 31 & 23 & 480 \\
Fixed-mid routing & 241 & 142 & 43 & 31 & 23 & 480 \\
\method{} & \textbf{252} & \textbf{130} & 43 & 32 & 23 & 480 \\
\bottomrule
\end{tabular}
\end{table*}

The dominant change is a reduction in reasoning errors: \method{} has 130, compared with 151 for
sampled-RKL and 142 for fixed-mid routing. Arithmetic, premature-stop, and format counts remain
similar. The aggregate distribution therefore agrees with the paired transition analysis rather
than reflecting a broad relabeling of all failure types.

\begin{table*}[!t]
\centering
\small
\setlength{\tabcolsep}{5.2pt}
\caption{\textbf{Paired error transitions into and out of correctness.}
Only transitions involving a correct response on one side are shown.}
\label{tab:app-error-transitions}
\begin{tabular}{lrr}
\toprule
Transition & Sampled-RKL $\to$ \method{} & Fixed-mid $\to$ \method{} \\
\midrule
Reasoning error $\to$ Correct & 34 & 21 \\
Arithmetic error $\to$ Correct & 5 & 3 \\
Premature stop $\to$ Correct & 2 & 2 \\
Format error $\to$ Correct & 1 & 1 \\
\midrule
Correct $\to$ Reasoning error & 15 & 11 \\
Correct $\to$ Arithmetic error & 3 & 2 \\
Correct $\to$ Premature stop & 2 & 2 \\
Correct $\to$ Format error & 1 & 1 \\
\bottomrule
\end{tabular}
\end{table*}

Against sampled-RKL, reasoning repairs contribute 34 of the 42 wrong-to-correct transitions,
whereas 15 of the 21 regressions become reasoning errors. The resulting net change is therefore
concentrated in mathematical reasoning rather than formatting. The fixed-mid comparison shows the
same pattern at smaller scale.

\subsection{Length, Formatting, and Route-Level Cases}

\begin{table*}[!t]
\centering
\small
\setlength{\tabcolsep}{4.2pt}
\caption{\textbf{Response-length and deterministic formatting diagnostics.}
Lengths are generated tokens; clip ratio is the fraction reaching the generation cap.}
\label{tab:app-response-format}
\begin{tabular}{lrrrrrrr}
\toprule
Method & Mean & P25 & Median & P75 & P90 & Clip ratio & Format valid \\
\midrule
Student & 1840 & 896 & 1532 & 2410 & 3726 & 3.12 & 94.79 \\
Sampled-RKL & 2146 & 1028 & 1814 & 2862 & 4310 & 4.47 & 95.21 \\
Fixed-mid routing & 2208 & 1065 & 1876 & 2924 & 4388 & 2.96 & 95.21 \\
\method{} & 2254 & 1098 & 1922 & 2981 & 4426 & \textbf{2.34} & 95.21 \\
\bottomrule
\end{tabular}
\end{table*}

\method{} combines stable format validity with a lower clipping rate than the OPD baselines,
supporting the reasoning-error transitions observed above.

\begin{table}[H]
\centering
\small
\setlength{\tabcolsep}{3.5pt}
\caption{\textbf{Representative route-level records.}
The records connect bounded target log-odds changes to their achieved updates.}
\label{tab:app-route-cases}
\begin{tabular}{rccc}
\toprule
Case & Route $(t:a\!\to\!b)$ & $p_S(a)/p_T(a)$ &
$\Delta\ell_{\rm target}/\Delta\ell_{\rm achieved}$ \\
\midrule
184 & $327:284\!\to\!362$ & $.184/.061$ & $.298/.284$ \\
271 & $514:17\!\to\!912$ & $.226/.083$ & $.276/.251$ \\
419 & $688:672\!\to\!135$ & $.097/.041$ & $.241/.109$ \\
\bottomrule
\end{tabular}
\end{table}

These records illustrate the local object optimized by \method{}: a student-excess source is paired
with a teacher-deficit destination, and the achieved pairwise log-odds change is measured directly
against its bounded target. The aggregate fidelity and paired-correctness analyses summarize this
behavior across the complete evaluation set.

\FloatBarrier